\title{Evolving Markov Chains: Unsupervised Mode Discovery and Recognition from Data Streams}
\author[1]{Kutalmış Coşkun}
\author[2]{Borahan Tümer}
\author[1]{Bjarne C. Hiller}
\author[1]{Martin Becker}
\affil[1]{University of Rostock, Germany}
\affil[2]{Marmara University, Türkiye}
\date{}
\begin{document}

\maketitle
 
\begin{abstract}
    Markov chains are simple yet powerful mathematical structures to model temporally dependent processes.
    They generally assume stationary data, i.e., fixed transition probabilities between observations/states.
    However, live, real-world processes, like in the context of activity tracking, biological time series, or industrial monitoring, often switch behavior over time.
    Such behavior switches can be modeled as transitions between higher-level \emph{modes} (e.g., running, walking, etc.).
    Yet all modes are usually not previously known, often exhibit vastly differing transition probabilities, and can switch unpredictably.
    Thus, to track behavior changes of live, real-world processes, this study proposes an online and efficient method to construct Evolving Markov chains (EMCs).
    EMCs adaptively track transition probabilities, automatically discover modes, and detect mode switches in an online manner.
    In contrast to previous work, EMCs are of arbitrary order, the proposed update scheme does not rely on tracking windows, only updates the relevant region of the probability tensor, and enjoys geometric convergence of the expected estimates.
    Our evaluation of synthetic data and real-world applications on human activity recognition, electric motor condition monitoring, and eye-state recognition from electroencephalography (EEG) measurements illustrates the versatility of the approach and points to the potential of EMCs to efficiently track, model, and understand live, real-world processes.
\end{abstract}
\keywords{
    incremental learning, %
    online mode discovery and recognition, %
    non-stationary systems, %
    concept drift %
}

\section{Introduction}
\label{sec:introduction}

Live real-world systems such as healthcare monitoring platforms, industrial automation processes and environmental monitoring systems generate vast amounts of real-time data that require continuous tracking for management, optimization, and decision-making \cite{khanra_BigData_2020,wang_BigData_2022}.
These systems are generally non-stationary, meaning their behavior changes over time due to various influencing factors.
This dynamic nature makes identifying and adapting to distinct behavioral states --- which are commonly referred to as \emph{modes} \cite{buede_EngineeringDesign_2009} and illustrated in \cref{fig:ga} (a) --- a challenging but equally useful task with immense practical value.
For instance, a healthcare monitoring system might operate differently depending on whether a patient is in the \enquote{mode} resting, exercising, or experiencing a medical emergency \cite{alsadoon_ArchitecturalFramework_2024}.
Similarly, a smart manufacturing plant might optimize maintenance costs by responding to known or unknown anomalies occurring in various stages of production, where transitions from normal to anomalous modes could be caused by equipment breakdowns or material defects \cite{aydemir_AnomalyMonitoring_2020}.
In this context, each mode represents a distinct type of behavior associated with characteristic observations exhibited by the system under specific conditions.
Discovering and recognizing these modes are central tasks for devising adaptive and robust systems that can dynamically adjust their operations in real-time, which not only can help maintain optimal performance but also allows preemptively addressing potential issues.

In this context, the need for real-time feedback and the constant influx of data from real-world systems necessitate an online approach to recognize modes
from streaming data.
Thus, optimally, a corresponding method will process the most recent observation from the system and predict the currently active mode before the next observations, as illustrated in the mode switching diagram in \cref{fig:ga} (d).
In real-world settings, this task is particularly challenging, as prior information about the number of modes and their behavior is often not available. 
That is,
\begin{inparaenum}[(i)]
    \item in general, for individual observations, labels regarding the active mode are unavailable (an example output sequence is shown in \cref{fig:ga} (b)), 
    \item there are no descriptions or models available for the modes,
    \item the number of modes is unknown,
    \item the transitions from one mode to another happen at unknown times,
    \item there is no feedback whether the predicted active mode is correct at runtime.
\end{inparaenum}

This problem setting is akin to clustering.
However, rather than associating individual observations with \emph{clusters} as in the standard \emph{stream clustering} \cite{zubaroglu_DataStream_2021} setting, here, contiguous time intervals in the data stream are identified (referred to as \emph{regimes}) which are in turn associated with \emph{modes}.
This allows to define modes as behavioral states and associate individual observations with modes by their temporal dependencies within a regime.
A simple example are observations of normal and high heart rate measurements in resting and exercise modes.
During a regime in exercise mode, the likelihood of observing a high heart rate measurement following another high heart rate measurement is significantly increased due to continuous physical exertion.
Conversely, during a regime in resting mode, a high heart rate is more likely to return to normal rather than remain elevated.
This difference in the temporal dependencies between observations shows how different modes can exhibit distinct patterns of behavior.

To discover and recognize the underlying modes of a dynamic system, this study proposes \acp{propAlg}.
\acp{propAlg} is an online method that constructs a dynamic $k$th order Markov chain and efficiently updates it after each observation from a stream of categorical values.
Changes in the continuously updated Markov chain are tracked to detect and recognize mode switches.
For each discovered mode, the parameters of the Markov chain are stored in memory to detect recurrences.
An overview of \ac{propAlg} is shown in \cref{fig:ga}.
%
The contribution of this study is threefold:
\begin{itemize}
    \item An efficient --- $O(mk)$ for each observation where $m$ is the cardinality of the state space --- and provably asymptotically convergent update mechanism for online estimation of $k$th order conditional probabilities.
    \item An algorithm to detect drifts and recognize the underlying modes of the target system from data streams by utilizing estimated conditional probabilities.
    \item In-depth illustration of the proposed method through synthetic and real-world applications from a variety of domains, such as
    \begin{inparaenum}
        \item human activity recognition,
        \item electric motor condition monitoring,
        \item human eye-state tracking through \ac{EEG} measurements.
    \end{inparaenum}
\end{itemize}

\begin{figure}[htbp]
    \centering
    \includegraphics[width=\linewidth]{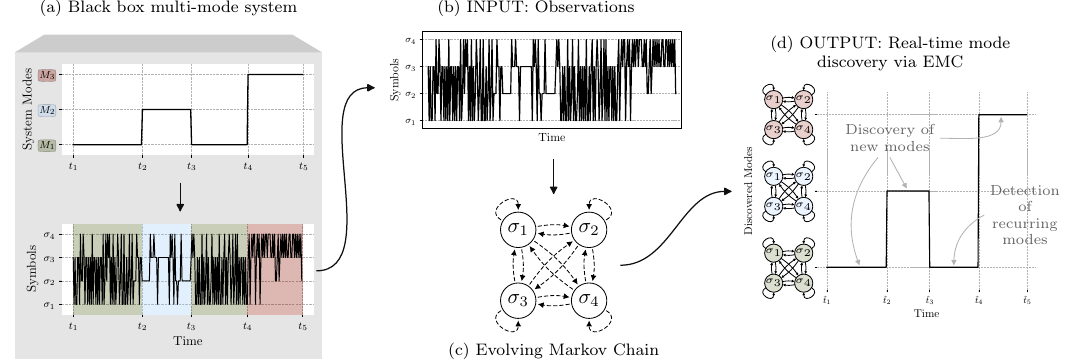}
    \caption{
        (a) The black box process has unknown underlying modes ($M_1$, $M_2$ and $M_3$) and it switches from one mode to another at arbitrary times.
        In this case, the mode switching sequence is $M_1\to M_2\to M_1\to M_3$ (top figure).
        The active mode at a given time determines how the symbol sequence is generated (bottom figure).
        (b) The sequential data generated by the black box process contains neither information about the underlying modes nor the switching points.
        (c) \ac{propAlg} processes the sequence in an online manner, and updates the state transition probabilities accordingly to discover the underlying modes.
        (d) The switching diagram of discovered modes shows which mode is active at each time point.
        Discovered modes are represented with Markov chains with different transition probabilities, indicated with different colors.
    }
    \label{fig:ga}
\end{figure}

The paper is organized as follows: In \Cref{sec:related_work}, we review related work, and in \Cref{sec:preliminaries}, we provide background information.
We define the \ac{OMDR} problem in \Cref{sec:problem} and introduce the proposed method in \Cref{sec:methodology}.
Experimental evaluation is presented in \Cref{sec:experimental_evaluation}.
Finally, in \Cref{sec:discussion_and_future_work}, we discuss results and outline future work, concluding with \Cref{sec:conclusion}.

\section{Related Work}
\label{sec:related_work}

We discuss the related work under the main topics of 
\begin{inparaenum}
    \item \emph{data stream clustering}, due to the connection to the mode discovery/recognition problem and 
    \item \emph{leveraging the temporal structure}, as it is related to the main contribution of this study.
\end{inparaenum}

\subsection{Data Stream Clustering}

The goal in data stream clustering is to maintain a finite set of categories (i.e., clusters) that accurately describe the sequence observed so far, while adhering to memory and time restrictions \cite{silva_DataStream_2013}.
Although the goal is similar, stream clustering poses a distinct set of challenges compared to traditional clustering \cite{zubaroglu_DataStream_2021,aggarwal_SurveyStream_2018,mousavi_DataStream_2015}, such as 
\begin{inparaenum}[(i)]
    \item data instances can only be processed once and in a certain order (i.e., as they arrive),
    \item processing shall be done before the next instance arrives, and
    \item it is not feasible to store the whole stream, only a form of synopsis regarding the data is stored.
\end{inparaenum}
These points are also valid for the \ac{OMDR} problem (see \Cref{subsec:online_estimation_of_cps}), and therefore apply to the proposed method, \ac{propAlg}, as well.

A common recipe for data stream clustering algorithms, which \ac{propAlg} also mostly follows, involves 
\begin{inparaenum}[(i)]
    \item an online method to generate summaries that represent different characteristics of the stream,
    \item a distance (or similarity) function that operates on the instance/cluster or cluster/cluster tuples, and
    \item a procedure to assign instances to clusters or optionally start a new cluster.
\end{inparaenum}
Most of the methods employ a two phase approach to utilize these components \cite{zubaroglu_DataStream_2021, silva_DataStream_2013,ackermann_StreamKMClustering_2012,yin_ImprovedClustering_2018}, namely in the \emph{online phase}, the synopsis of the data is extracted and in the \emph{offline phase} the final clustering is done via standard clustering methods periodically or upon request from the user.
On the contrary, the method we propose is \emph{truly online}, that is, both the summary generation (constructing and updating the Markov chain) and the sample to cluster assignment are realized online.
Other fully online approaches utilize drift detection mechanisms \cite{puschmann_AdaptiveClustering_2017} or constant measurement of clustering quality \cite{andradesilva_EvolutionaryAlgorithm_2017} to make standard $k$-means applicable to the streaming scenario.
There are also density based \cite{hyde_FullyOnline_2017} and distance based \cite{zhang_DBIECManEvolving_2017} fully online methods as well as ones that work with categorical data \cite{aggarwal_ClusteringMassive_2010}.
While these approaches primarily focus on finding and maintaining clusters that accurately represent the data, they all lack a capability that is particularly important for the \ac{OMDR} problem we address in this paper.
Specifically, they are reliant on instance features to find clusters and do not effectively utilize the temporal information that is inherently present in data streams.
To address this gap, \ac{propAlg} performs stream clustering by constructing a dynamic Markov chain that intrinsically extracts the temporal structure of the data in the form of conditional probabilities and does not rely on instance features.

\subsection{Leveraging the Temporal Structure}

In the context of understanding the temporal structure of sequential data, Markov models stand out due to their simple yet powerful ability to model state transitions probabilistically~\cite{ching_MarkovChains_2013}.
The simplest offline way to build the stochastic matrix of a Markov chain from a sequence of observations is to calculate the transition frequencies and normalize them by the total number of transitions originating from the corresponding state \cite{gagniuc_MarkovChains_2017}.
There are also offline approaches that utilize genetic algorithms \cite{wang_EvolutionaryMarkov_2022} and Bayesian statistics \cite{mizutani_ImprovingEstimation_2017}.
Another prominent offline method that builds a \emph{variable} order Markov chain \cite{schulz_FastAdaptive_2008} has recently been used for mode discovery and recognition in industrial domains \cite{surmeli_MultivariateTime_2020,coskun_SyntacticPattern_2022}; and although it is not an online method, we provide comparisons of \ac{propAlg} with this approach in experimental evaluation part of this study. 

On the other hand, if the measurements/observations are obtained in real-time or the target process has a non-stationary nature, a mechanism to \emph{adapt} the changing conditions is needed.
A natural way to achieve this is to use the most recent transitions through a landmark/sliding window \cite{ren_AnomalyDetection_2017,tilahun_CooperativeMultiagent_2017}.
An alternative approach is to maintain mean transition frequencies \cite{kolmanovsky_StochasticOptimal_2009}.
This is similar to how our method (\ac{propAlg}) estimates conditionals probabilities from a mathematical point of view, i.e., it results in exponential forgetting of older transitions.
However, the way \ac{propAlg} derives transition probabilities has significant computational advantages: 
\begin{inparaenum}[(i)]
    \item \ac{propAlg} only updates the relevant region of the transition probability tensor, which costs $O(mk)$ (where $m$ is the cardinality of the state set and $k$ is the order of the conditional probabilities) compared to the full update that costs $O(m^{k+1})$ as proposed in \cite{kolmanovsky_StochasticOptimal_2009}, and
    \item \ac{propAlg} directly operates on the transition probabilities instead of frequencies, which removes the need for maintaining an additional value for normalization and the division operation.
\end{inparaenum}
In addition, \ac{propAlg}'s update scheme can estimate $k$th order dependencies in contrast to only first order, and we provide a mathematical proof on the convergence of expected transition probabilities.

In a related study \cite{aslanci_DetectionRegime_2017}, \ac{SCD} algorithm is proposed to find change points in sequences by estimating the conditional probabilities.
\ac{propAlg} builds on top of \ac{SCD} by refining the estimation mechanism and utilizing conditional probabilities for the \ac{OMDR} problem.
A comparison of \ac{propAlg} and \ac{SCD} on change detection task is provided in the experimental evaluation part.

Another relevant work that focuses on learning the temporal structure of sequential data is \ac{TRACDS} \cite{hahsler_TemporalStructure_2011}.
Although the motivation of \cite{hahsler_TemporalStructure_2011} is similar to ours, \ac{TRACDS} learns the temporal structure of the clusters, not the data points.
Therefore, it misses the temporal dependencies that exist in the sequence that could characterize a mode, rather it is proposed as an extension to an existing stream clustering method.

Overall, while there are methods that share the same motivation of utilizing the temporal structure to cluster sequential data, given the requirements of real-time processing, there is a need for computationally efficient methods that are applicable for tasks like \ac{OMDR}.

\section{Preliminaries}
\label{sec:preliminaries}

This section provides information regarding the definitions and terminology referred to throughout the paper.

\subsection{Non-Stationary Systems}
\label{subsec:nss}

In the general sense, \emph{stationarity} is defined as a property of a signal (or a system through a signal obtained from it) that indicates invariance regarding time \cite{box_TimeSeries_2016}.
Non-stationarity is simply the absence of this property due to the intrinsic \emph{evolving} or \emph{drifting} nature of the system \cite{ditzler_LearningNonstationary_2015}.
There could be different ways in which a system might be non-stationary, such as the drift could be sudden, gradual, incremental and recurring \cite{lu_LearningConcept_2018}.
Considering the \ac{OMDR} problem (formally defined in \cref{sec:problem}), we focus on mode switches that are sudden and possibly recurring, in which case, a \emph{change point} or \emph{switching point} defines a point in time that separates two different behavioral segments (i.e., regimes) that are associated with different modes.
An example of such non-stationary behavior is shown in \cref{fig:ga} (a).

\subsection{Online Estimation of Multinomial Distributions}
\label{subsec:prelim_online}

For online estimation of the probability distribution $P_{X}$ from the realizations of random variable $X$, \citet{oommen_StochasticLearningbased_2006} presented \ac{SLWE} that uses the update equation in \cref{eq:slwe_multinomial_update_rule}.
Here, $\hat{p}_{\sigma}[n]$ represents the estimate at time $n$ of the probability of observing realization $\sigma$ and $0<\lambda<1$ is the learning coefficient.
\begin{equation}
    \label{eq:slwe_multinomial_update_rule}
    \hat{p}_{\sigma_{i}}[n] \gets
    \begin{cases*}
        \hat{p}_{\sigma_{i}}[n-1]+(1-\lambda)\sum\limits_{j \neq i}\hat{p}_{\sigma_{j}}[n-1],&if $X_{n}=\sigma_{i}$ \\
        \lambda \hat{p}_{\sigma_{i}}[n-1],&if $X_{n} \neq \sigma_{i}$
   \end{cases*}
\end{equation}

While the probability distribution $P_{X}$ estimated by \ac{SLWE} provides insight into certain properties of $X$, it does not capture the temporal structure inherent in the sequence of realizations of $X$.
Since $P_{X}$ is merely a non-conditional probability distribution, it lacks the dependency information necessary to describe the temporal relationships between successive realizations of $X$.
However, the update scheme used in \ac{SLWE} forms the basis of how we predict the $k$th order conditional probabilities in this study, which is described in detail in \cref{subsec:online_estimation_of_cps}.

\section{Problem Statement: Online Mode Discovery and Recognition}
\label{sec:problem}

We refer to the problem studied in this article as Online Mode Discovery and Recognition (\ac{OMDR}).
A mode is defined as a distinct operational capability of a system \cite{buede_EngineeringDesign_2009}.
While a system might have multiple fully operational modes (e.g., low battery/charging modes of a laptop), there might also be partially operational modes (e.g., maintenance mode of an elevator system) and unwanted failure modes (e.g., sensor failure in autonomous vehicles).
Thus, in \emph{mode discovery}, the goal is to extract such distinct behavior types solely from the output of the system (e.g., measurements obtained through a set of sensors) without any label regarding the underlying modes.
In \emph{mode recognition}, on the other hand, the goal is to identify the discovered mode based on the previously seen ones, which necessitates a notion of memory that stores a model for each mode.
Finally, in \ac{OMDR}, discovery and recognition are done online, that is, the observations from the system are processed one by one and a prediction is done regarding the active mode at each time step.
In \cref{def:omdr}, the \ac{OMDR} problem is formally defined.

\begin{definition}[Online Mode Discovery and Recognition]
    \label{def:omdr}
    Let $X=\langle x_{1},x_{2},\dots,x_{t},\dots \rangle$ be an infinite sequence of uni- or multivariate data points where $t\in\mathbb{N}^{+}$ represents the time index.
    Observations are from a finite set of symbols $\sigma \in \Sigma$, also called alphabet.
    Also, let $\mathcal{M}_t$ be the set of modes that are known at time $t$.
    The goal in \ac{OMDR} is to assign a mode $M\in\mathcal{M}_t$ to $x_t$ as $x_t$ becomes available, where $\mathcal{M}_t$ is the set of modes that are known at time $t$ and $\mu(x_t) = M$ denotes the assigned mode for $x_t$.
    If $x_t$ can not be assigned to any mode in $\mathcal{M}_t$ a new mode $\bar{M}$ is created and $\mathcal{M}_{t+1} = \mathcal{M}_{t} \cup \{\bar{M}\}$.
    Regimes are disjunct time intervals from time point $t_{s}$ to $t_e$, where all observations $x_t$ are assigned to the same mode $M$: $R_{s:e}^M = \{ x_t ~|~  t_s \leq t \leq t_e \wedge \mu(x_t) = M\}$.
    Once a data point $x_t$ is assigned to a mode $M$ the assignment is final and cannot be changed based on future data or observations.
\end{definition}

\section{Evolving Markov Chains (EMCs)}
\label{sec:methodology}

This section introduces the proposed method.

\subsection{Online Estimation of Conditional Probabilities}
\label{subsec:online_estimation_of_cps}

At the heart of our evolving approach lies the online estimation of $k$th order conditional probabilities from a data stream.
These probabilities form the stochastic tensor that defines a $k$th order Markov chain, which is essentially the model that we learn from data to represent a mode (\cref{fig:ga} (d)).

As previously mentioned, \ac{SLWE} \cite{oommen_StochasticLearningbased_2006} applies incremental updates to the probabilities of possible observations (symbols) with \cref{eq:slwe_multinomial_update_rule}.
However, symbol probabilities alone do not capture the temporal structure of the sequence.
Therefore, we extend \cref{eq:slwe_multinomial_update_rule} so that the $k$th order conditional probabilities are estimated.
Namely, let $\vec{P}$ be the $(k+1)$ dimensional non-stationary stochastic tensor of $k$th order conditional probabilities.
In order to maintain an always up-to-date estimate of $\vec{P}$, we update the corresponding \ac{CPD} $\vec{\hat{P}}[n]$ after each observation at time point $n$ with the update rule given in \cref{eq:ehmc_update_rule},
\begin{gather}
    \label{eq:ehmc_update_rule}
    \hat{p}[n] \gets
    \begin{cases*}
        \lambda \hat{p}[n-1]+(1-\lambda) & if $C_{1}$ \\
        \lambda \hat{p}[n-1] & if $C_{2}$ \\
        \hat{p}[n-1] & if $C_{3}$ \\
   \end{cases*} \\[6pt]
   \begin{aligned}
       C_{1}&: \langle X_{n-k}, \dots, X_{n} \rangle = \langle s_{1}, \dots, s_{k+1} \rangle \\
       C_{2}&:(\langle X_{n-k}, \dots, X_{n-1} \rangle = \langle s_{1}, \dots, s_{k} \rangle) \land (X_{n} \neq s_{k+1}) \\
       C_{3} &: \langle X_{n-k}, \dots, X_{n-1} \rangle \neq \langle s_{1}, \dots, s_{k} \rangle
   \end{aligned}\nonumber
\end{gather}
where
$\hat{p}[n]$ stands for the estimated probability $\hat{p}_{(s_{k+1} \given s_{1} \dots s_{k})}$ at time point $n$.
$\hat{p}[n]$ is a component of $\hat{\vec{P}}[n]$ and represents the \enquote{$k$th order} conditional probability of observing the symbol $s_{k+1}$ after the sequence $\langle s_{1},\dots,s_{k} \rangle$.
Here, $s_{i}$ is an observation of a symbol from a finite alphabet $\Sigma$ with $|\Sigma|=m$.
Also, $0<\lambda<1$ represents a learning coefficient.
The case $C_1$ corresponds to the case in which the sequence of past $k+1$ observations $\langle X_{n-k}, \dots, X_{n} \rangle$ matches target $s_{k+1}$ and condition $\langle s_{1},\dots,s_{k} \rangle$ of the probability $\hat{p}[n]$ being updated.
The corresponding probability is decreased by multiplying by $\lambda$ and increased by adding $(1-\lambda)$.
In case $C_2$, the most recent observation $X_n$ does not match $s_{k+1}$. 
Therefore the corresponding probability is decreased by multiplying by $\lambda$.
In case $C_3$, $\hat{p}[n]$ is unchanged since the observations and the condition do not match.

\cref{eq:ehmc_update_rule} ensures that after each update all probabilities conditioned on the same event add up to $1$, i.e., $\sum_{s \in \Sigma} p(s \given X) = 1$, where $X$ can be a single event or a sequence of events according to the Markov order.
This way, the resulting Markov chain always remains up-to-date with only $|\Sigma|=m$ update operations per observation.

In \cref{thm:arbitrary_order_convergence}, we show the weak convergence of the estimate $\hat{\vec{P}}$ obtained with the update rule given in \cref{eq:ehmc_update_rule} to the real probabilities $\vec{P}$.

\begin{theorem}
    \label{thm:arbitrary_order_convergence}
    The estimate $\hat{\vec{P}}[n]$ at time $n$ (i.e., after the $n$th observation) calculated using \cref{eq:ehmc_update_rule}, weakly converges to the actual $(k+1)$ dimensional stochastic tensor $\vec{P}$ of the corresponding $k$th order ergodic Markov chain, 
    i.e.,
    $\mathbb{E}(\hat{\vec{P}}[\infty])=\vec{P}$.
\end{theorem}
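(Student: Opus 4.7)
The plan is to fix an arbitrary component $\hat p[n] = \hat p_{(s_{k+1}\mid s_{1},\dots,s_{k})}[n]$ of $\hat{\vec P}[n]$ and show that $\mathbb{E}(\hat p[n])$ satisfies a first-order linear recurrence of the form $a_n = \alpha\, a_{n-1} + \beta$ with $\alpha \in (0,1)$, whose unique fixed point is the true conditional probability $p := p(s_{k+1}\mid s_{1},\dots,s_{k})$. Since this recurrence contracts geometrically, the whole tensor converges componentwise, which is exactly the claim $\mathbb{E}(\hat{\vec P}[\infty]) = \vec P$.

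\textbf{Setting up the recurrence.} First I would condition on which of the three cases $C_1$, $C_2$, $C_3$ of \cref{eq:ehmc_update_rule} triggers at step $n$. Let $q := \Pr(\langle X_{n-k},\dots,X_{n-1}\rangle = \langle s_{1},\dots,s_{k}\rangle)$ denote the stationary probability of the conditioning pattern appearing at time $n{-}1,\dots,n{-}k$; by the ergodicity hypothesis, $q$ is well defined and strictly positive, so I may work in the stationary regime (otherwise one replaces $q$ by $q_n \to q$ and the same argument goes through with an asymptotic error that vanishes). By the $k$th-order Markov property, conditional on the pattern being matched, the probability that $X_n = s_{k+1}$ is exactly $p$; i.e., $C_1$ has probability $qp$, $C_2$ has probability $q(1-p)$, and $C_3$ has probability $1-q$. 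Because the update depends on $\hat p[n-1]$ only through a deterministic affine map determined by the case, and the case indicator depends only on $X_{n-k},\dots,X_n$ (which are independent of the shape of $\hat p[n-1]$ in expectation under stationarity), I would take expectations to obtain
\begin{equation*}
\mathbb{E}(\hat p[n]) = qp\bigl(\lambda\,\mathbb{E}(\hat p[n-1]) + (1-\lambda)\bigr) + q(1-p)\bigl(\lambda\,\mathbb{E}(\hat p[n-1])\bigr) + (1-q)\,\mathbb{E}(\hat p[n-1]).
\end{equation*}
Collecting terms yields $\mathbb{E}(\hat p[n]) = \alpha\,\mathbb{E}(\hat p[n-1]) + \beta$ with $\alpha = 1 - q(1-\lambda)$ and $\beta = q(1-\lambda)p$.

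\textbf{Solving and concluding.} Since $0 < \lambda < 1$ and $q > 0$, we have $\alpha \in (0,1)$, so iterating gives $\mathbb{E}(\hat p[n]) = \alpha^{n}\,\mathbb{E}(\hat p[0]) + \beta\,\frac{1-\alpha^{n}}{1-\alpha}$. The fixed point is $\beta/(1-\alpha) = q(1-\lambda)p / \bigl(q(1-\lambda)\bigr) = p$, and $\alpha^n \to 0$, so $\mathbb{E}(\hat p[n]) \to p$ geometrically at rate $\alpha$. Applying this to every coordinate of the stochastic tensor gives $\mathbb{E}(\hat{\vec P}[\infty]) = \vec P$. The stochasticity constraint $\sum_{s\in\Sigma}\hat p_{(s\mid s_{1},\dots,s_{k})}[n] = 1$ is preserved by the update (the $C_1$ branch for the realized symbol exactly compensates the $C_2$ branches for the other symbols), so the limit is a genuine conditional distribution.

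\textbf{Main obstacle.} The delicate part is the justification that one may simply insert the stationary occurrence probability $q$ into the expectation. For a general initialization the process is not started in its stationary distribution, so strictly speaking the marginal probability of the conditioning event at step $n$ depends on $n$; to handle this cleanly I would either (i) invoke ergodicity to replace $q$ by $q_n$ and absorb the vanishing difference $|q_n-q|$ into the inhomogeneous term of the recurrence, or (ii) appeal to the fact that for any ergodic finite-state $k$th-order chain the distribution of $(X_{n-k},\dots,X_{n-1})$ converges geometrically to its stationary law, so the time-varying coefficients still produce a contraction whose limit is $p$. A secondary subtlety is the apparent dependence between the indicator of $C_1\cup C_2$ at step $n$ and the value of $\hat p[n-1]$, inherited through shared past observations; however, because the update is affine in $\hat p[n-1]$ and I only need expectations (not a.s.\ convergence), conditioning on $X_{n-k},\dots,X_n$ and then integrating out suffices. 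This is exactly why the theorem is phrased as weak (in-expectation) convergence rather than pathwise.
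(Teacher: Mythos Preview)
Your proposal is correct and follows essentially the same approach as the paper: assign stationary probabilities $q p$, $q(1-p)$, $1-q$ to the three branches of \cref{eq:ehmc_update_rule}, take expectations to obtain an affine recurrence in $\mathbb{E}(\hat p[n])$, and identify the fixed point as $p$. The paper uses the notation $A=q$, $B=p$, derives the equivalent form $\mathbb{E}(\hat p_n)=(1-\lambda)[AB-A\,\mathbb{E}(\hat p_{n-1})]+\mathbb{E}(\hat p_{n-1})$, and then simply lets $n\to\infty$ to read off $\mathbb{E}(\hat p_\infty)=B$; your version is slightly more explicit in that you solve the recurrence in closed form and thereby exhibit the geometric rate $\alpha=1-q(1-\lambda)$, whereas the paper tacitly assumes the limit exists and only mentions geometric convergence afterwards by citation. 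Your discussion of the stationarity and dependence subtleties goes beyond what the paper spells out but does not change the underlying argument.
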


\begin{proof}
    To calculate the expected value, we write \cref{eq:ehmc_update_rule} with probabilities assigned to each condition as given in \cref{eq:ehmc_update_rule_with_probs},
    \begin{equation}
        \label{eq:ehmc_update_rule_with_probs}
        \hat{p}_{n} \gets
        \begin{cases*}
            \lambda \hat{p}_{n-1}+(1-\lambda) & with prob. $AB$   \\
            \lambda \hat{p}_{n-1} & with prob. $A(1-B)$  \\
            \hat{p}_{n-1} & with prob. $1-A$
       \end{cases*}
    \end{equation}
    where we use $A$ for $p_{(s_{1}, \dots, s_{k})}$, $B$ for $p_{(s_{k+1} \given s_{1}, \dots, s_{k})}$ and $\hat{p}_{n}$ for $\hat{p}_{(s_{k+1} \given s_{1}, \dots, s_{k})}[n]$ for simplicity.
    
    The conditional expectation of $\hat{p}_{n}$
    given the estimated stochastic tensor from the previous time point $\hat{\vec{P}}[n-1]$ is written as \cite{oommen_StochasticLearningbased_2006}:
    \begin{align}
		\begin{split}
			\mathbb{E}(\hat{p}_{n} \given \hat{\vec{P}}[n-1]) %
            &= AB(\lambda \hat{p}_{n-1}+1-\lambda) \\
            &\quad + A(1-B)\lambda\hat{p}_{n-1} + (1-A)\hat{p}_{n-1} \\
            &= (1-\lambda)(AB-A\hat{p}_{n-1})+\hat{p}_{n-1}
		\end{split}
    \end{align}
    Taking the expectation of both sides one more time, we get:
    \begin{equation}
        \mathbb{E}(\hat{p}_{n}) = (1-\lambda)[AB-A\mathbb{E}(\hat{p}_{n-1})]+\mathbb{E}(\hat{p}_{n-1})
    \end{equation}
    As $n\to\infty$, both $\mathbb{E}(\hat{p}_{n})$ and $\mathbb{E}(\hat{p}_{n-1})$ approach $\mathbb{E}(\hat{p}_{\infty})$:
    \begin{align}
		\begin{split}
            \mathbb{E}(\hat{p}_{\infty}) &= (1-\lambda)[AB-A\mathbb{E}(\hat{p}_{\infty})]+\mathbb{E}(\hat{p}_{\infty}) \\
            0 &= (1-\lambda)[AB-A\mathbb{E}(\hat{p}_{\infty})]
        \end{split}
    \end{align}
    For which to hold, either $\lambda=1$ or $\mathbb{E}(\hat{p}_{\infty})=B$.
    Since $0<\lambda<1$ by definition, we get:
    \begin{equation}
        \mathbb{E}(\hat{p}_{\infty}) = B
    \end{equation}
    Returning to the original notation, we get:
    \begin{equation}
        \mathbb{E}(\hat{p}_{(s_{k+1} \given s_{1}, \dots, s_{k})}[\infty]) = p_{(s_{k+1} \given s_{1}, \dots, s_{k})}
    \end{equation}
    This means that the expected estimate is asymptotically equal to the true value and since this is true for all components of $\hat{\vec{P}}[n]$, we get $\mathbb{E}(\hat{\vec{P}}[\infty]) = \vec{P}$, which completes the proof.
\end{proof}

Although the asymptotic convergence shown in the proof of \cref{thm:arbitrary_order_convergence} may seem impractical for estimating the stochastic tensor of a $k$th order Markov chain, the convergence occurs in geometric manner \cite{oommen_StochasticLearningbased_2006,narendra_LearningAutomata_2012}, making it possible to learn regimes in a non-stationary sequence as defined in \cref{subsec:nss}.

Next, we describe entropy regulation, an auxiliary update mechanism that complements \cref{eq:ehmc_update_rule} and addresses cases where non-updated components of $\vec{P}$ pose potential issues.

\subsection{Entropy Regulation}

The update scheme in \cref{eq:ehmc_update_rule} does not modify the probability estimate $\hat{p}_{(s_{k+1} \given s_{1}, \dots, s_{k})}$ if the observation sequence $\langle x_{n-k}, \dots, x_{n-1} \rangle$ does not match the condition $\langle s_{1}, \dots, s_{k} \rangle$.
This is because such observations provide no information about $\hat{p}_{(s_{k+1} \given s_{1}, \dots, s_{k})}$.
However, in certain cases, it might be useful to interpret the consistent lack of observations for a particular outcome as a signal to \emph{remove} the existing information (i.e., increase entropy) related to the corresponding conditional probabilities.
We refer to this operation as \emph{entropy regulation}, and it is realized by gradually changing the \acp{CPD} towards uniform distribution $\vec{U}$, as shown in \cref{eq:entropy_regulation},
\begin{equation}
    \label{eq:entropy_regulation}
    \vec{Q}[n] = (1-\beta)\vec{Q}[n-1] + \beta \vec{U}
\end{equation}
where $\vec{Q}$ is a \ac{CPD} and $0<\beta<1$ is the regularization rate.

Entropy regulation is particularly useful when a previously observed symbol $s$ is either never or rarely seen in a new mode.
This could happen due to the transitions leading to $s$ having small probability values, or $s$ not being an element of the set of states defined for the new mode.
In such cases, \cref{eq:ehmc_update_rule} may insufficiently update the transition probabilities conditioned on $s$, leaving residual values that no longer reflect the active mode.
Incorporating \cref{eq:entropy_regulation} alongside \cref{eq:ehmc_update_rule} addresses this issue, albeit with the potential trade-off of additional computational complexity, as discussed in \cref{subsec:mth:complexity}.

The pseudocode for online estimation of conditional probabilities is given in \cref{alg:emc_estimate_p}.
\begin{algorithm}
    \caption{Online Estimation of Transition Probabilities}
    \label{alg:emc_estimate_p}
    \begin{algorithmic}[1]
        \Require $k$, $\vert\Sigma\vert$, $\lambda$, $\beta$
        \State initialize queue $Q$ of observations, uniform tensors $\hat{P}$ and $U$, empty condition vector $c$
        \While{$Q$ is not empty}
            \State get observation $s$ from $Q$
            \If{$|c|=k$} \Comment{wait until $k$ observations}
                \State $\hat{P}(\forall \sigma\in\Sigma \given c) \gets \lambda\hat{P}(\forall \sigma\in\Sigma \given c)$ \Comment{penalty}
                \State $\hat{P}(s \given c) \gets \hat{P}(s \given c)+(1-\lambda)$  \Comment{reward}
                \If{$\beta>0$} \Comment{entropy regulation}
                    \State $U(\forall \sigma\in\Sigma \given c) \gets \hat{P}(\forall \sigma\in\Sigma \given c)$
                    \State $\hat{P} \gets (1-\beta)\hat{P} + \beta U$
                    \State reset $U$
                \EndIf
                \State remove the oldest element from $c$
            \EndIf
            \State append $s$ to $c$
        \EndWhile
    \end{algorithmic}
\end{algorithm}

Next, we discuss how the estimated stochastic tensor can be tracked to detect drifts occurring in data streams. 

\subsection{Drift Detection}

Online estimation of conditional probabilities allows for tracking and detecting drifts in streaming data which later will be used to detect mode changes.
To quantify drift, we use the Hellinger distance \cite{hellinger_NeueBegrundung_1909} as defined in \cref{eq:hellinger_distance}.
\begin{equation}
    \label{eq:hellinger_distance}
    H(P,Q) = \frac{1}{\sqrt{2}} \sqrt{\sum_{i=1}^{k}(\sqrt{p_{i}}-\sqrt{q_{i}})^{2}}
\end{equation}

The Hellinger distance, bounded in $[0,1]$, measures the distance between two probability distributions, with $0$ indicating identical distributions and $1$ indicating disjoint ones.
It can be used to detect changes by periodically comparing the latest estimate $\hat{P}[n]$ with a past version $\hat{P}[n-\tau]$.
Using a $\tau$-step interval for comparisons allows changes in $\hat{P}$ to accumulate and reduces computational costs.
If the distance exceeds a threshold $\delta$, as given in \cref{eq:distance_threshold}, a drift is detected.
\begin{equation}
    \label{eq:distance_threshold}
    H(\hat{P}[n],\hat{P}[n-\tau]) > \delta
\end{equation}

\ac{propAlg} utilizes the result of \cref{eq:distance_threshold} to keep an indicator $\phi$ of steady ($\phi=1$) or drift ($\phi=0$) conditions.
This indicator $\phi$ is utilized in determining when to perform a memory check.
Namely, a memory check is done if $\phi=0$ and $\Delta \hat{P} <= \delta$ (i.e., drift \textrightarrow{} steady) to identify the transitioned mode.
Additionally, to pick up early matches, checks are done during drift conditions ($\phi=0$).
The pseudocode is given in \cref{alg:emc_check_drift}.
\begin{algorithm}
    \caption{Check Drift}
    \label{alg:emc_check_drift}
    \begin{algorithmic}[1]
        \State $\Delta\hat{P} \gets H(\hat{P}, \hat{P}_{\text{prv}})$ \Comment{\cref{eq:hellinger_distance}}
        \If{$\phi=1$}
            \If{$\Delta\hat{P} > \delta$}
                \State $\phi\gets0$ \Comment{drift detected}
                \State \texttt{identify\_regime()} \Comment{\cref{alg:emc_identify_regime}}
            \EndIf
        \Else
            \If{$\Delta\hat{P} < \delta$}
                \State $\phi\gets1$ \Comment{drift ended, now steady}
            \EndIf
            \State \texttt{identify\_regime()} \Comment{\cref{alg:emc_identify_regime}}
        \EndIf
    \end{algorithmic}
\end{algorithm}

The internal variable $\phi$ is also utilized 
\begin{inparaenum}
    \item for keeping the memory of modes read-only during drifts and
    \item in fast/slow learning, which are described in \cref{subsec:mode_discovery_and_recognition,subsec:fast_and_slow_learning}.
\end{inparaenum}

\subsection{Mode Discovery and Recognition}
\label{subsec:mode_discovery_and_recognition}

\ac{propAlg} discovers and recognizes modes of a non-stationary system by maintaining a memory of transition probabilities linked to previously observed modes.
A memory check compares the most recent estimate for transition probabilities with existing models in the memory using the Hellinger distance.
If the distance to the closest mode is below the model similarity threshold $\eta$, \ac{propAlg} updates its prediction to this closest mode. If no mode is sufficiently similar, \ac{propAlg} waits for steady conditions ($\phi=1$) before initializing a new mode in memory, preventing unnecessary entries from transient changes.

Moreover, \ac{propAlg} also performs updates on the modes existing in the memory.
The idea is to utilize the convergence property shown in \cref{thm:arbitrary_order_convergence} by using a incremental mean estimator and update the model in memory with the most recent $\hat{\vec{P}}$ only when $\phi=1$.
This way, with more data becoming available regarding an underlying mode, the representation of that mode in memory is gradually updated to get a more accurate model.
The pseudocode is given in \cref{alg:emc_identify_regime}.
\begin{algorithm}
    \caption{Identify Regime}
    \label{alg:emc_identify_regime}
    \begin{algorithmic}[1]
        \If{$|\mathcal{M}|=0$} \Comment{if memory is empty}
            \If{$\phi=1$}
                \State append $M_{\hat{P}}$ to $\mathcal{M}$ \Comment{save new mode}
            \EndIf
        \Else
            \State $d \gets \min_{M \in \mathcal{M}} H(\hat{P}, M)$ \Comment{dist. to the closest model}
            \If{$d<\eta$}
                \State $M_{\text{pred}} \gets \argmin_{M \in \mathcal{M}} H(\hat{P}, M)$
                \If{$\phi=1$}
                    \State update $M$ using $\hat{P}$ \Comment{incremental mean update}
                \EndIf
            \Else
                \If{$\phi=1$}
                    \State append $M_{\hat{P}}$ to $\mathcal{M}$ \Comment{new mode saved}
                    \State $M_{\text{pred}} \gets M_{\hat{P}}$
                \EndIf
            \EndIf
        \EndIf
    \end{algorithmic}
\end{algorithm}

\subsection{Fast and Slow Learning}
\label{subsec:fast_and_slow_learning}

Utilizing the internal variable $\phi$ about drift and steady conditions, \ac{propAlg} can adaptively switch between fast (exploratory) and slow (exploitative) learning modes, which has been proposed to improve multiplicative learning schemes \cite{coskun_AdaptiveEstimation_2022}.
These two learning modes have different learning coefficients $\lambda_{f}$ and $\lambda_{s}$.
This mechanism is useful since
\begin{inparaenum}
    \item with $\lambda_{f}$, conditional probabilities approach true values faster after regime changes, and
    \item with $\lambda_{s}$, the variance of estimates is lower during each regime.
\end{inparaenum}
In addition to $\lambda$, the previously introduced deviation threshold $\delta$ and model similarity threshold $\eta$ may have different values for fast and slow learning modes.

\subsection{Complexity Analysis}
\label{subsec:mth:complexity}

A transition probability tensor of a $k$th order Markov chain with $m$ states includes $m^{k+1}$ transition probabilities.
Following each observation, only the relevant \ac{CPD}, which has $m$ elements, is updated by \cref{eq:ehmc_update_rule}.
However, the computational complexity of this update is $O(mk)$ due to the access time required for a $k$-dimensional tensor. 
If entropy regularization is used, the full tensor is updated, which brings the computational complexity to $O(m^{k+1})$.
Computation of the Hellinger distance between two transition probability tensors is $O(m^{k+1})$, which is done in every $\tau$ time steps, and additionally $|\mathcal{M}|$ times when a drift is detected.
Runtime tests are given in Supplemental Fig. 1 and 2.

\subsection{How to Set the Parameters?}

Up to now we have introduced learning coefficient $\lambda$, entropy regularization parameter $\beta$, drift threshold $\delta$, model distance threshold $\eta$ and drift checking frequency $\tau$.

Regarding $\lambda$, it is stated in \cite{oommen_StochasticLearningbased_2006} that values in $[0.9, 0.99]$ are suggested for multi-action \ac{LA} schemes.
Thus, setting the fast learning coefficient $\lambda_{f}$ close to $0.9$ and the slow learning coefficient $\lambda_{s}$ close to $0.99$ is appropriate.
Regarding the regularization parameter $\beta$, due to the additional computational complexity, we recommend setting it to $0$ if all states $s\in\Sigma$ are expected to be visited in all modes.

Parameters $\delta$ and $\eta$ are thresholds based on the same distance measure: $\delta$ triggers transitions between steady and drift conditions, while $\eta$ assesses proximity to stored modes.
A lower $\delta$ increases drift detection, but mode prediction relies on distances to stored modes, adding robustness against incorrect switches.
Based on our findings, we recommend $\delta\in[0.1,0.3]$ and $\eta\in[0.25,0.5]$, with slightly lower values when modes show similar transitions.
While not necessary, $\delta$ and $\eta$ can be set to slightly higher values in fast learning case due to the larger magnitude of updates on $\hat{P}$.

The parameter $\tau$ determines the frequency of drift checks and can be linked to $\lambda$ so that $\tau\approx1/(1-\lambda)$, which corresponds to $\tau=10$ for $\lambda=0.9$ and $\tau=100$ for $\lambda=0.99$.

As previously discussed, \ac{propAlg} operates in an unsupervised manner. However, if labels are available for at least two modes, parameter values can be selected using Bayesian optimization \cite{optuna_2019} so that \ac{propAlg} distinguishes these modes.
This approach allows for the identification of an effective parameter set tailored to a specific task, even with labels from a single mode transition.
In practice, this method has shown to be effective in real-world applications of \ac{propAlg}, particularly in the condition monitoring of electric motors (refer to \cref{subsec:cwru}).

\section{Experimental Evaluation}
\label{sec:experimental_evaluation}

In this section, we illustrate the proposed method on synthetic and real-world data.
The hallmark of \ac{propAlg} is that it can utilize the temporal structure that exists in the sequential data for detecting the underlying modes of a non-stationary system.
Therefore, we dedicate separate sets of experiments on synthetic data to address the following key questions:
\begin{itemize}
    \item How well the temporal structure information is extracted given a non-stationary sequence (i.e., conditional probability tracking)?
    \item How accurately the changes in the temporal structure are detected (i.e., change point detection)?
    \item How well the underlying modes are detected (i.e, mode discovery and recognition)?
\end{itemize}

In addition to the synthetic experiments, \ac{propAlg} is evaluated on real-world applications including
\begin{inparaenum}
    \item human activity recognition from accelerometer data,
    \item condition monitoring on electric motors from vibration data, and
    \item human eye-state detection from \ac{EEG} data.
\end{inparaenum}

Datasets used in real-world applications are all publicly available.
Additionally, implementation of the proposed method and the synthetic data generators are also available\footnote{\texttt{\url{https://bckrlab.org/p/emc}}}.

\subsection{Experiments on Synthetic Data}

We generate data by sampling sequences from random modes, defined by the following parameters:
\begin{inparaenum}[(i)]
    \item the number of modes $|\mathcal{M}|$,
    \item the Markov order $k$,
    \item the number of states $|\Sigma|$,
    \item regime duration distribution $P_{d}$, and 
    \item number of regimes $R$.
\end{inparaenum}
Transition probabilities are randomly generated using a flat Dirichlet distribution.
We use the same values of $k$, $\Sigma$ and $P_{d}$ across all $M \in \mathcal{M}$, making the problem more challenging since no state is exclusive to a subset of modes.

To generate synthetic data streams, first we generate $|\mathcal{M}|$ $k$th order Markov chains, each with $|\Sigma|$ states.
Then, we randomly generate an $R$-long sequence of mode indices (i.e., $R$ regimes), and for each regime we sample a sequence of symbols from the corresponding Markov chain.
Regime lengths are sampled from $P_{d}$.
As a result, we obtain a symbol sequence with an expected total length of $\mathbb{E}(P_{d})\times R$.

In accordance with the key questions discussed earlier, we group the synthetic experiments into three categories, which are
\begin{inparaenum}[(i)]
    \item probability tracking,
    \item change point detection, and
    \item mode discovery and recognition.
\end{inparaenum}

\subsubsection{Probability Tracking}

In this set of experiments, we aim to demonstrate the temporal structure tracking capabilities of \ac{propAlg} by comparing estimated and true conditional probabilities across different modes of a non-stationary system.

Synthetic data streams are generated with the following parameters: the number of modes $|\mathcal{M}|$ is $5$ and each of these modes is a first-order Markov chain ($k=1$) with $4$ states ($|\Sigma|=4$).
From these modes, $R=10$ regimes are generated.
To measure the effect of regime durations, three different regime duration distributions (short/medium/long) are used: $P_{d}=\{\mathcal{U}(500,1000),
\mathcal{U}(1500,2000),\mathcal{U}(2500,3000)\}$, where $\mathcal{U}(a,b)$ is a uniform distribution between $a$ and $b$.

We compare the proposed method with
\begin{inparaenum}[(i)]
    \item \ac{MC-SW} \cite{ren_AnomalyDetection_2017,tilahun_CooperativeMultiagent_2017}, which constructs the chain from the data inside a fixed-size sliding window, and
    \item \ac{MC-ADWIN}, which utilizes \ac{ADWIN} algorithm \cite{bifet_LearningTimeChanging_2007} to make window size adaptive.
\end{inparaenum}

The test set includes \num{100} sequences generated with different seeds.
An additional \num{10} sequences are used to optimize each method's hyperparameters via Bayesian optimization, with results provided in Supplementary Table 1.
For \ac{MC-SW}, we add cases with shorter ($w=100$) and longer ($w=500$) windows to illustrate effects on various regime durations.
All methods are set to build a first-order Markov chain ($k=1$).

Performance evaluation is done by comparing the estimated conditional probabilities with true ones at each time point in terms of \ac{AE}.
Mean and cumulative \ac{AE} are used to report single values representing the performance of a method on a sequence.
The results are given in \cref{tab:mss_tracking}, and an example run is visualized in \cref{fig:tracking_cae}.

\begin{table}[ht]
    \centering
    \footnotesize
    \renewcommand{\arraystretch}{1.2}
    \pgfplotstableread[
        col sep=comma,
        header=false,
        skip first n=1
        ]{syn_pt_mae.csv}\data
    \pgfplotstabletypeset[
        columns={0,1,2,3},
        columns/0/.style={
            string type,
            column type={l},
            column name={},
        },
        columns/1/.style={
            string type,
            column type=c,
            column name={$\mathcal{U}(500,1000)$},
        },
        columns/2/.style={
            string type,
            column type=c,
            column name={$\mathcal{U}(1500,2000)$},
        },
        columns/3/.style={
            string type,
            column type={c},
            column name={$\mathcal{U}(2500,3000)$},
        },
        bm/.style={postproc cell content/.style={@cell content=\bm{##1}}},
        bf/.style={postproc cell content/.style={@cell content=\textbf{##1}}},
        ul/.style={postproc cell content/.style={@cell content=\underline{##1}}},
        every row 0 column 1/.style={bm},
        every row 0 column 2/.style={bm},
        every row 0 column 3/.style={bm},
        every row 3 column 1/.style={ul},
        every row 3 column 2/.style={ul},
        every row 4 column 3/.style={ul},
        every head row/.style={%
            before row={%
                \toprule
                \multirow{2.4}{*}{\textbf{Method}} & \multicolumn{3}{c}{\textbf{Regime Duration}} \\
                \cmidrule(lr){2-4}
            },
            after row={\midrule}
        },
        every last row/.style={after row={\bottomrule}}
    ]{\data}
    \caption{
        \Acl{MAE} between true and estimated conditional probabilities.
        Each cell contains the mean $\pm$ standard deviation of $100$ runs.
        Best values are shown in bold font.
        Second-best values are underlined.
    }
    \label{tab:mss_tracking}
\end{table}

\begin{figure}[htbp]
    \centering
    \includegraphics{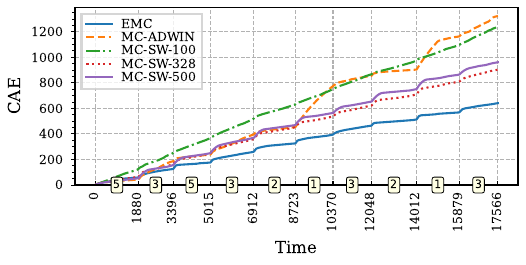}
    \caption{
        \Acf{CAE} between true and estimated conditional probabilities for $P_{d}=\mathcal{U}(1500,2000)$.
        The numbers inside the boxes along the horizontal axis indicate the mode index $i$ of $M_{i} \in \mathcal{M}$.
    }
    \label{fig:tracking_cae}
\end{figure}

\cref{tab:mss_tracking} presents results based on different regime duration distributions.
Considering the fixed-size sliding-window method \ac{MC-SW}, as expected, shorter windows are more useful on shorter regimes due to quicker adaptation to rapid changes.
As the regime duration increases, we observe that $w=500$ outperforms $w=328$ due to the increased utilization of data, which outweighs the benefits of faster adaptation.

The adaptive-window-size approach (\ac{MC-ADWIN}) underperformed compared to fixed-window-size methods.
Further inspection revealed that \ac{MC-ADWIN} continually increases its window size due to undetected changes.
This issue arises from the change detection mechanism of \ac{ADWIN}, which is based on identifying changes in data distribution rather than the temporal structure of the data.
For example, two consecutive modes with entirely different temporal structures could share the same data distribution, illustrating the need to leverage temporal information for adapting switching modes.

\ac{propAlg} obtained the smallest error values for \emph{all} types of regime duration distributions with the \emph{same} parameter values.
This is also visible in the result of an example run given in \cref{fig:tracking_cae}, where the cumulative error obtained by the proposed method is the smallest compared to other approaches.

\subsubsection{Change Point Detection}

This set of experiments focus on measuring how accurately changes in the temporal structure are detected on data streams.
We employ the \emph{classification view} \cite{burg_EvaluationChange_2020} to change point detection task, that is, each time point is either a change point or not, and the goal is to correctly classify them into these two categories.

Synthetic data streams are generated with the same parameters as those employed in probability tracking experiments.
In this case, only medium length regimes, $P_{d}=\mathcal{U}(1500,2000)$, are used.
Therefore, the expected total length of a sequence is $15000$ and it involves $9$ change points.

We compare the proposed method with online change point detection algorithms from the literature, namely 
\begin{inparaenum}[(i)]
    \item \ac{ADWIN} \cite{bifet_LearningTimeChanging_2007}
    \item \ac{KSWIN} \cite{raab_ReactiveSoft_2020}
    \item \ac{PHT} \cite{page_ContinuousInspection_1954,gama_EvaluatingStream_2013}, and
    \item \ac{SCD} \cite{aslanci_DetectionRegime_2017}.
\end{inparaenum}

Similar to the probability tracking experiments, the test set involves \num{100} sequences generated with different seeds, and a separate set of \num{10} sequences are used for hyperparameter optimization.
The values are given in Supplementary Table 2.

We utilize two key metrics for evaluation.
First, the $F_{1}$ score assesses the accuracy of detections, considering a detection as true positive if it occurs within a \ac{MOE}, after the actual change point.
In this case, \ac{MOE} is set to $1/6$th of the shortest regime, which is $250$.
Note that we label detections occurring prior to the true change point as false positive, since there is no indication of a regime change in the data before the actual point.
Second, we utilize the \emph{mean detection lag}, representing the average difference between true and detected points for all true positives.
The results are given in \cref{tab:syn_cpd}.

\begin{table}[ht]
    \centering
    \footnotesize
    \renewcommand{\arraystretch}{1.2}
    \pgfplotstableread[
        col sep=comma,
        header=false,
        skip first n=1
        ]{syn_cpd_f1.csv}\data
    \pgfplotstabletypeset[
        columns={0,1,2,3,4},
        columns/0/.style={
            string type,
            column type={l},
            column name={\textbf{Method}},
        },
        columns/1/.style={
            string type,
            column type=c,
            column name={\textbf{\makecell{F1\\Score (\textuparrow)}}},
        },
        columns/2/.style={
            string type,
            column type=c,
            column name={\textbf{\makecell{False\\Negatives (\textdownarrow)}}},
        },
        columns/3/.style={
            string type,
            column type=c,
            column name={\textbf{\makecell{False\\Positives (\textdownarrow)}}},
        },
        columns/4/.style={
            string type,
            column type={c},
            column name={\textbf{\makecell{Mean Detection\\Lag (\textdownarrow)}}},
        },
        bm/.style={postproc cell content/.style={@cell content=\bm{##1}}},
        bf/.style={postproc cell content/.style={@cell content=\textbf{##1}}},
        ul/.style={postproc cell content/.style={@cell content=\underline{##1}}},
        every row 0 column 0/.style={postproc cell content/.style={@cell content={##1 \cite{bifet_LearningTimeChanging_2007}}}},
        every row 1 column 0/.style={postproc cell content/.style={@cell content={##1 (Ours)}}},
        every row 2 column 0/.style={postproc cell content/.style={@cell content={##1 \cite{raab_ReactiveSoft_2020}}}},
        every row 3 column 0/.style={postproc cell content/.style={@cell content={##1 \cite{page_ContinuousInspection_1954,gama_EvaluatingStream_2013}}}},
        every row 4 column 0/.style={postproc cell content/.style={@cell content={##1 \cite{aslanci_DetectionRegime_2017}}}},
        every row 1 column 1/.style={bm},
        every row 1 column 2/.style={bm},
        every row 1 column 3/.style={bm},
        every row 4 column 4/.style={bm},
        every row 4 column 1/.style={ul},
        every row 4 column 2/.style={ul},
        every row 4 column 3/.style={ul},
        every row 1 column 4/.style={ul},
        every head row/.style={before row={\toprule}, after row={\midrule}},
        every last row/.style={after row={\bottomrule}}
    ]{\data}
    \caption{
        Change point detection results.
        Each cell contains the mean $\pm$ standard deviation of $100$ runs.
        Best values are shown in bold font.
        Second-best values are underlined.
    }
    \label{tab:syn_cpd}
\end{table}

As shown in \cref{tab:syn_cpd}, \ac{propAlg} obtained the highest $F_{1}$ score along with the fewest false negatives and false positives, and the lowest standard deviation, highlighting its detection consistency.
Note that mean detection lag is measured \emph{only} for true positives, so it is best assessed in conjunction with the $F_{1}$ score.
For example, a low detection lag with poor overall detection accuracy does not necessarily indicate strong performance -- it only suggests that when a change is detected, it's detected relatively quickly.
Given that, \ac{propAlg} stands out as the most reliable overall, as it pairs a near-best mean detection lag with the highest $F_{1}$ score. 

The low performance of \ac{ADWIN} mirrors that of \ac{MC-ADWIN}, as mode changes are not always apparent when only data distribution is considered.
\ac{SCD}'s second-best $F_{1}$ score is expected, given its use of first-order Markov probabilities for change detection.
\ac{propAlg} enhances \ac{SCD}’s mechanism by incorporating fast/slow learning and the additional robustness due to having a memory for drift/change differentiation.

\subsubsection{Mode Discovery and Recognition}

In this case, the task is to correctly recognize different modes in the data stream, which encompasses the goals of the first two experiment types, namely the method is expected to
\begin{inparaenum}
    \item build models from data,
    \item detect change points, and
    \item discover and recognize regimes that originate from the same mode.
\end{inparaenum}

Data streams are generated with the same parameters as those employed in change point detection experiments.
We compare \ac{propAlg} with \ac{EPSTM} \cite{surmeli_MultivariateTime_2020}, which works \emph{offline} and requires change points given to it.
Since an online method inevitably requires some time to recognize different modes, to set up a fair comparison, we provide \emph{delayed} change points to \ac{EPSTM}.
The delay is set as the \emph{mean recognition lag} of \ac{propAlg} representing the average time between true change points and the recognition of the new regime.
Additionally, to demonstrate the scenario with perfect change points, we run another instance of \ac{EPSTM} using the true change points.
We also run CluStream \cite{aggarwal_FrameworkClustering_2003} and DBStream \cite{hahsler_ClusteringData_2016} on one-hot-encoded data, representing partitioning and density based clustering approaches.

Similar to previous experiments, the test set involves \num{100} sequences generated with different seeds and a separate set of \num{10} sequences are used to select the hyperparameters of methods using Bayesian optimization.
The parameter values used are given in Supplementary Table 3.

Due to the similarity of this task to clustering, to measure the performance of methods we use \ac{ARI}, a metric that assesses the similarity between the true and predicted clustering, corrected for chance.
Results are given in \cref{tab:mss_mdr}.
Also, the transition plots of true and discovered modes for an example run is depicted in \cref{fig:mdr_transition}.

\begin{table}[ht]
    \centering
    \footnotesize
    \renewcommand{\arraystretch}{1.2}
    \pgfplotstableread[
        col sep=comma,
        header=false,
        skip first n=1
        ]{syn_mdr_ari.csv}\data
    \pgfplotstabletypeset[
        columns={0,1,2,3},
        columns/0/.style={
            string type,
            column type={l},
            column name={\textbf{Method}},
        },
        columns/1/.style={
            string type,
            column type={c},
            column name={\textbf{\makecell{Operation\\Type}}},
        },
        columns/2/.style={
            string type,
            column type={c},
            column name={\textbf{\makecell{Change\\Points}}},
        },
        columns/3/.style={
            string type,
            column type={c},
            column name={\textbf{ARI (\textuparrow)}},
        },
        bm/.style = {postproc cell content/.style={@cell content/.add={\boldmath}{}}},
        every row 0 column 0/.style={postproc cell content/.style={@cell content={##1 (Ours)}}},
        every row 1 column 0/.style={postproc cell content/.style={@cell content={##1 \cite{aggarwal_FrameworkClustering_2003}}}},
        every row 2 column 0/.style={postproc cell content/.style={@cell content={##1 \cite{hahsler_ClusteringData_2016}}}},
        every nth row={3}{before row=\midrule},
        every row 3 column 0/.style={postproc cell content/.style={@cell content={##1 \cite{surmeli_MultivariateTime_2020}}}},
        every row 4 column 0/.style={postproc cell content/.style={@cell content={##1 \cite{surmeli_MultivariateTime_2020}}}},
        every row 0 column 3/.style={bm},
        every row 3 column 3/.style={bm},
        every head row/.style={before row={\toprule}, after row={\midrule}},
        every last row/.style={after row={\bottomrule}}
    ]{\data}
    \caption{
        Mode discovery and recognition results.
        Each cell contains the mean $\pm$ standard deviation of $100$ runs.
        Best values for online and offline methods are shown in bold font. 
    }
    \label{tab:mss_mdr}
\end{table}

\begin{figure}[htbp]
    \centering
    \includegraphics{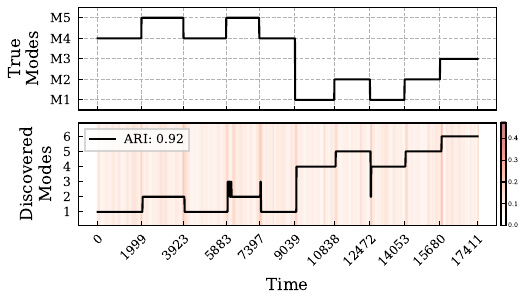}
    \caption{
        Transition plots of true modes (top) and detected modes (bottom) on synthetic data.
        The heatmap in the bottom plot represents the Hellinger distance, quantifying the detected drift over time.
    }
    \label{fig:mdr_transition}
\end{figure}

As seen in \cref{tab:mss_mdr}, when true change points are available, \ac{EPSTM} gets $0.98$ \ac{ARI}.
Although \ac{propAlg} inevitably suffers from recognition lag due to being an online method, it managed to achieve $0.85$ ARI.
When the same amount of lag is introduced to change points given to \ac{EPSTM}, its performance significantly decreases.
The stream clustering methods CluStream and DBStream managed to get ARI values that are slightly better than random assignment, showing the need for utilizing the temporal structure of the data in \ac{OMDR} task.

\Cref{fig:mdr_transition} shows how \ac{propAlg} discovers modes and distinguishes regimes originating from different sources.
Note that since \ac{propAlg} builds a memory as data arrive, the indices of discovered modes not necessarily match those of the system modes (e.g., discovered mode $1$ does not correspond to system mode \texttt{M1}, rather corresponds to \texttt{M4}).
Although there are incorrect switches during the system mode transitions from \texttt{M4} to \texttt{M5}, \texttt{M5} to \texttt{M4} and \texttt{M2} to \texttt{M1}, \ac{propAlg} successfully identified the correct system mode once more data became available.

\subsection{Experiments on Human Activity Data}

We evaluate \ac{propAlg} on \texttt{RealWorld (HAR)} dataset \cite{sztyler_OnbodyLocalization_2016} which covers \num{3}-dimensional acceleration data collected from \num{15} subjects while performing the following activities:
\begin{inparaenum}[(i)]
    \item climbing stairs down,
    \item climbing stairs up,
    \item jumping,
    \item lying,
    \item standing,
    \item sitting,
    \item running/jogging, and
    \item walking.
\end{inparaenum}
The goal is to test whether \ac{propAlg} is able to discover individual activities online, without prior knowledge of change points or the number of activities.

We follow the \ac{SPR} methodology as outlined in \cite{fu_SyntacticPattern_1977} to preprocess the \num{3}-dimensional continuous sensor data, transforming it into a discrete sequence.
In the lexical analysis part of a typical \ac{SPR} application, short temporal patterns, referred to as primitives, are extracted from the original time-series data.
These primitives are basically the building blocks of the data, and the collection of these patterns is referred to as the \emph{alphabet}.
The original signal is then reconstructed using the primitives, resulting in a sequence of primitive indices, which serves as input to \ac{propAlg}.

To discretize the \num{3}-dimensional continuous data, we employ the algorithm proposed in \cite{coskun_SyntacticPattern_2022} with a slight modification.
Specifically, the extracted primitives are \emph{composite}, that is, they consist of equal-length segments from each feature of the sensor data---here, the $x$, $y$ and $z$ components of the accelerometer data.
For this dataset, we build an alphabet of composite primitives based on the data from a single hold-out subject (Subject \num{1}).
The primitives have a length of \num{15} and are obtained by clustering time-series segments using Breathing $k$-means \cite{fritzke_BreathingKMeans_2021} into \num{7} clusters, which is determined by the Davies-Bouldin score \cite{davies_ClusterSeparation_1979}.
Then, the original data is reconstructed by iteratively selecting the best-fitting cluster, producing a sequence of cluster indices.
The parameters are optimized on the same hold-out data using Bayesian optimization, and the values are given in Supplementary Table 4.

For the remaining subjects (Subjects $2$--$15$), randomly selected sequences of $10$ activities are used, and performance on these sequences is evaluated using the \ac{ARI} metric.
Additionally, we calculate a variant of \ac{ARI} that only considers predictions made in the steady condition (i.e., $\phi=1$), to assess the improvement in performance if only confident predictions are included.
We also report the proportion of predictions made in the drift condition (i.e., $\phi=0$).
The results are presented in \cref{tab:rwhar} and an experiment run is visualized in \cref{fig:har_mt}.

\begin{table}[ht]
    \centering
    \footnotesize
    \renewcommand{\arraystretch}{1}
    \pgfplotstableread[
        col sep=comma,
        header=false,
        skip first n=1
        ]{har_ari.csv}\data
    \pgfplotstabletypeset[
        columns={0,1,2,3},
        columns/0/.style={
            string type,
            column type={c},
            column name={\textbf{\makecell{Subject\\ID}}}
        },
        columns/1/.style={
            string type,
            column type={c},
            column name={\textbf{ARI}},
        },
        columns/2/.style={
            string type,
            column type={c},
            column name={\textbf{\makecell{ARI\\(for only $\phi=1$)}}},
        },
        columns/3/.style={
            string type,
            column type={c},
            column name={\textbf{\makecell{Drift\\Ratio}}},
        },
        bm/.style={postproc cell content/.style={@cell content=\bm{##1}}},
        bf/.style={postproc cell content/.style={@cell content=\textbf{##1}}},
        every row 14 column 0/.style={bf},
        every row 14 column 1/.style={bm},
        every row 14 column 2/.style={bm},
        every row 14 column 3/.style={bm},
        every head row/.style={before row={\toprule}, after row={\midrule}},
        every last row/.style={before row={\midrule}, after row={\bottomrule}}
    ]{\data}
    \caption{
        Mode discovery and recognition results on human activity data.
        Each cell is the mean $\pm$ standard deviation of $10$ runs.
    }
    \label{tab:rwhar}
\end{table}

\begin{figure}[htbp]
    \centering
    \includegraphics{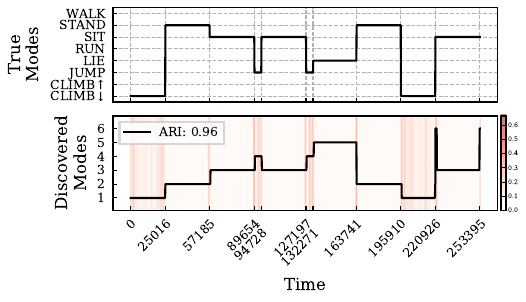}
    \caption{
        Transition plots of true modes (top) and detected modes (bottom) on activity data of subject \num{9}.
        The heatmap in the bottom plot represents the Hellinger distance, quantifying the detected drift over time.
    }
    \label{fig:har_mt}
\end{figure}

The results in \cref{tab:rwhar} demonstrate that, despite using an alphabet constructed from the data of a single subject, \ac{propAlg} achieves an average \ac{ARI} of $0.72$.
Furthermore, as shown in \cref{fig:har_mt}, the estimated probabilities for certain actions (in this case CLIMB↓ and JUMP) exhibit significant fluctuations, even in the absence of a regime change.
Nevertheless, despite high deviation, \ac{propAlg} consistently remained in the correct predicted mode, highlighting the robustness of the proposed method.

\subsection{Experiments on Electric Motor Vibration Data}
\label{subsec:cwru}

We also test the proposed method on \ac{CWRU} Bearing Dataset \cite{CWRU} which involves vibration data collected from electric motors with healthy and faulty bearings under various load conditions.
The severity of the fault is determined by the diameter of the damage on the bearing, which impacts the vibration behavior of the motor.
The possible diameter values are \num{0.007}, \num{0.014}, \num{0.021} and \num{0.028} inches, with load conditions ranging from \num{0} to \num{3} horsepower.
The data is sampled at a rate of \SI{12}{\kilo\hertz}.

The experimental setup mirrors that of the human activity experiments: \ac{propAlg} is tested on sequences of different fault modes to evaluate its ability to detect and differentiate between them.
In particular, we focus on a scenario where the fault progressively worsens in terms of the damage diameter.

The construction of the primitive alphabet and the hyperparameter optimization are done on a portion of the data that only includes a single transition from one fault condition to another without load, namely \texttt{FD:07} \textrightarrow{} \texttt{FD:28}, which means the fault diameter increases from \num{0.007} to \num{0.028} inches.
Since the data is univariate, we directly apply the alphabet construction algorithm from \cite{coskun_SyntacticPattern_2022}.
In this case, the primitives are of length \num{2}, and are obtained by clustering the time-series segments using Breathing $k$-means \cite{fritzke_BreathingKMeans_2021} into \num{27} clusters, which is again determined by the Davies-Bouldin score \cite{davies_ClusterSeparation_1979}.

For this experiment, a \emph{second-order} \ac{propAlg} is used with the hyperparameters optimized via Bayesian optimization on the hold-out data and given in Supplementary Table 4.

The test set consists of data with transitions from 
\begin{inparaenum}[(i)]
    \item healthy to faulty modes, and
    \item healthy to faulty and then another (more severe) faulty modes.
\end{inparaenum}
The results are presented in \cref{tab:cwru} with a mode transition plot shown in \cref{fig:cwru_mt}.

\begin{table}[htbp]
    \centering
    \footnotesize
    \renewcommand{\arraystretch}{1}
    \pgfplotstableread[
        col sep=comma,
        header=false,
        skip first n=1
        ]{cwru_ari.csv}\data
    \pgfplotstabletypeset[
        columns={1,0,2,3,4},
        columns/0/.style={
            string type,
            column type={r},
            column name={\textbf{\makecell{Mode Transition\\Scenario}}}
        },
        columns/1/.style={
            string type,
            column type={c},
            column name={\textbf{\makecell{Load}}},
            assign cell content/.code={
                \pgfmathparse{int(Mod(\pgfplotstablerow,7)}%
                \ifnum\pgfmathresult=0%
                    \pgfkeyssetvalue{/pgfplots/table/@cell content}{
                        \multirow{7}{*}{##1}
                    }
                \else
                    \pgfkeyssetvalue{/pgfplots/table/@cell content}{}
                \fi
            },
        },
        columns/2/.style={
            fixed zerofill,
            precision=2,
            column type={c},
            column name={\textbf{ARI}},
        },
        columns/3/.style={
            fixed zerofill,
            precision=2,,
            column type={c},
            column name={\textbf{\makecell{ARI\\(for only $\phi=1$)}}},
        },
        columns/4/.style={
            fixed zerofill,
            precision=2,,
            column type={c},
            column name={\textbf{\makecell{Drift\\Ratio}}},
        },
        bf/.style={postproc cell content/.style={@cell content=\textbf{##1}}},
        bm/.style = {postproc cell content/.style={@cell content/.add={\boldmath}{}}},
        every head row/.style={before row={\toprule}, after row={\midrule}},
        every nth row={7}{before row=\midrule},
        every last row/.style={after row={\bottomrule}},
        every row 28 column 0/.style={bf},
        every row 28 column 2/.style={bm},
        every row 28 column 3/.style={bm},
        every row 28 column 4/.style={bm},
    ]{\data}
    \caption{
        Results on \acs{CWRU} Bearing dataset.
        The test cases cover transitions from the normal mode (i.e., OK) to various fault conditions as well as transitions from one fault to another more severe one.
        Each transition scenario is tested with different loads (\num{0} to \num{4} HP).
    }
    \label{tab:cwru}
\end{table}

\begin{figure}[htbp]
    \centering
    \includegraphics{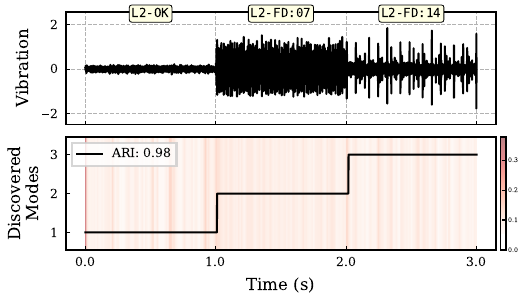}
    \caption{
        Result on vibration data under \num{2} HP of load with mode transition sequence of \texttt{OK} \textrightarrow{} \texttt{FD:07} \textrightarrow{} \texttt{FD:14}.
        The labels of each condition is given along the top horizontal axis.
        The heatmap in the bottom plot represents the Hellinger distance, quantifying the detected drift over time.
    }
    \label{fig:cwru_mt}
\end{figure}

Results given in \cref{tab:cwru,fig:cwru_mt} show that \ac{propAlg} successfully distinguishes different fault conditions.
Given that the primitive alphabet construction and the hyperparameter optimization processes are done only on a single mode switch between two fault modes, from an anomaly detection perspective, the results highlight the online unsupervised anomaly detection capabilities of the proposed method.
Furthermore, testing under various load conditions demonstrates the domain adaptation potential of the proposed method, which reflects a more realistic scenario as different use cases impose varying loads on an electric motor during operation.

\subsection{Experiments on Electroencephalography (EEG) Data}

Finally, we evaluate \ac{propAlg} on \ac{EEG} measurements from the \texttt{EEG Eye State} dataset \cite{misc_eeg_eye_state_264}, which covers \num{14} \ac{EEG} channels with an additional binary signal representing the eye state (i.e., open or closed), which is added manually after inspecting the video recordings.
The goal is to test whether the proposed method manages to find these two modes and correctly switch between them as the eye state of the person actually changes.

A discrete sequence is obtained by the microstate segmentation algorithm presented in \cite{pascual-marqui_SegmentationBrain_1995} and implemented in \cite{pycrostates}, with $|\Sigma|=8$.
The sequence is then streamed to a first-order \ac{propAlg} with hyperparameters (given in Supplementary Table 4) obtained by Bayesian optimization on the first \num{3000} (\SI{20}{\percent}) instances of the sequence.
The \ac{ARI} is calculated for the rest (\SI{80}{\percent}) of the sequence.
The result is given in \cref{fig:eeg_mt}.

\begin{figure}[htbp]
    \centering
    \includegraphics{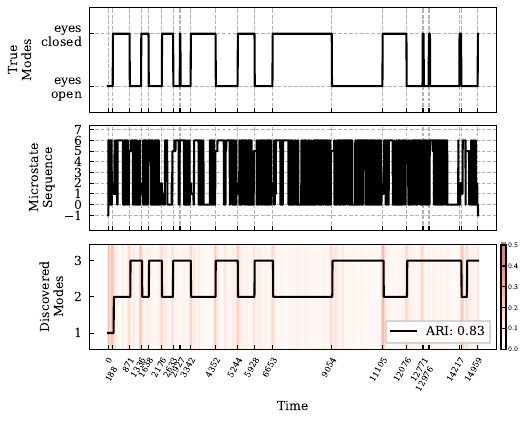}
    \caption{
        Results of \ac{EEG} eye-state task.
        \ac{ARI} becomes \num{0.9} if only the predictions made during steady condition are considered.
        In this case, first \SI{20}{\percent} of the sequence is used for hyperparameter optimization, and the \ac{ARI} is calculated for the rest of the sequence.
        The heatmap in the bottom plot represents the Hellinger distance, quantifying the detected drift over time.
    }
    \label{fig:eeg_mt}
\end{figure}

The discovered modes \texttt{2} and \texttt{3} in \cref{fig:eeg_mt} correspond to closed and open eye states, respectively.
The transitions between these modes largely align with the actual eye state changes, except for a few brief intervals, such as at $t=12771$ and $t=12976$.
A practical application of this result could be the automatic annotation of \ac{EEG} signals as they are being recorded, since human errors, such as  misplaced or forgotten cursors indicating eye states, are common.

\section{Discussion and Future Work}
\label{sec:discussion_and_future_work}

Online tasks such as change point detection and mode discovery/recognition present distinct challenges while offering significant practical value in real-time applications.
Through experimental evaluation of \ac{propAlg}, we show the importance of the temporal structure in input data and demonstrate potential applications utilizing it.
Our synthetic experiments cover key properties of inherent temporal structure in data sequences, and the results demonstrate that \ac{propAlg} effectively leverages these structures to detect changes and discover/recognize the underlying modes of non-stationary systems.
For the \ac{OMDR} task, the baseline stream clustering methods are ineffective because they do not incorporate temporal structure information when determining cluster assignments.
Additionally, results from real-world applications indicate that \ac{propAlg} successfully identifies underlying modes across various tasks in different domains.

In real-world applications, we used a relatively small portion of each dataset for hyperparameter optimization: $6.7\%$ of the human activity data, $10\%$ of the electric motor data and $20\%$ of the \ac{EEG} data).
This approach aims to show that \ac{propAlg} can perform well with minimal data for tuning, making it applicable in scenarios where data for optimization is limited.
Furthermore, in human activity and electric motor condition monitoring experiments, the subsets used for hyperparameter optimization had different characteristics than the test sets.
This highlights the generalization and transferability capability of the method and its robustness in handling domain shifts.

Potential extensions of \ac{propAlg} may include making it more dynamic with regard to unknown symbols.
That is, currently \ac{propAlg} requires a predefined number symbols, which correspond to the states of the learned Markov chain.
It is possible to extend \cref{eq:ehmc_update_rule} so that it combines/splits states and the corresponding probabilities.
Nevertheless, for the scope of this study the fixed-state-space assumption is kept to provide the first results regarding the asymptotic convergence of the update equation and possible real-world applications.
Additionally, \ac{propAlg} currently requires discrete data.
In real-world experiments, we addressed this by clustering the multivariate data to convert it into sequences of discrete values (i.e., cluster indices).
However, extending \ac{propAlg} to directly work with multidimensional continuous data would improve its flexibility and applicability across a wider range of scenarios.
Finally, although \cref{eq:ehmc_update_rule} can estimate arbitrary order conditional probabilities, it inevitably suffers from the exponential increase in number of probabilities as the order increases.
This is partially compensated by updating only a fixed region of the probability tensor, however the space complexity is still exponential.
Finally, making \ac{propAlg} more dynamic with regard to its state space may also improve its applicability in more complex settings, i.e., by adaptively selecting the Markov order from data.

\section{Conclusion}
\label{sec:conclusion}

With this study, we introduced \acfp{propAlg}, which learn from streaming categorical data to discover and recognize underlying operational modes.
We believe that the way that \ac{propAlg} efficiently extracts and utilizes the inherent temporal structure in data streams will pave the way for new methodologies in the \ac{OMDR} task defined in this study, potentially enhancing both theoretical understanding and practical applications.
As demonstrated in experiments, the small amount of data required to optimize the hyperparameters of \ac{propAlg} aligns well with few-shot learning principles, making it easily applicable to real-world tasks with minimal effort.

\section*{Acknowledgments}

The authors would like to thank MInD and Becker Lab members for their valuable feedback.
Funding was provided by the BMBF (01IS22077).

\printbibliography

@article{ackermann_StreamKMClustering_2012,
  title = {{{StreamKM}}++: {{A}} Clustering Algorithm for Data Streams},
  shorttitle = {{{StreamKM}}++},
  author = {Ackermann, Marcel R. and M{\"a}rtens, Marcus and Raupach, Christoph and Swierkot, Kamil and Lammersen, Christiane and Sohler, Christian},
  year = {2012},
  month = jul,
  journal = {ACM Journal of Experimental Algorithmics},
  volume = {17},
  issn = {1084-6654, 1084-6654},
  doi = {10.1145/2133803.2184450}
}

@article{aggarwal_ClusteringMassive_2010,
  title = {On Clustering Massive Text and Categorical Data Streams},
  author = {Aggarwal, Charu C. and Yu, Philip S.},
  year = {2010},
  month = aug,
  journal = {Knowledge and Information Systems},
  volume = {24},
  number = {2},
  pages = {171--196},
  issn = {0219-1377, 0219-3116},
  doi = {10.1007/s10115-009-0241-z},
  copyright = {http://www.springer.com/tdm}
}

@incollection{aggarwal_FrameworkClustering_2003,
  title = {A {{Framework}} for {{Clustering Evolving Data Streams}}},
  booktitle = {Proceedings 2003 {{VLDB Conference}}},
  author = {Aggarwal, Charu C. and Yu, Philip S. and Han, Jiawei and Wang, Jianyong},
  year = {2003},
  pages = {81--92},
  publisher = {Elsevier},
  doi = {10.1016/B978-012722442-8/50016-1},
  isbn = {978-0-12-722442-8}
}

@incollection{aggarwal_SurveyStream_2018,
  title = {A {{Survey}} of {{Stream Clustering Algorithms}}},
  booktitle = {Data {{Clustering}}},
  author = {Aggarwal, Charu C.},
  editor = {Aggarwal, Charu C. and Reddy, Chandan K.},
  year = {2018},
  month = sep,
  edition = {1},
  pages = {231--258},
  publisher = {{Chapman and Hall/CRC}},
  doi = {10.1201/9781315373515-10},
  isbn = {978-1-315-37351-5}
}

@article{alsadoon_ArchitecturalFramework_2024,
  title = {An Architectural Framework of Elderly Healthcare Monitoring and Tracking through Wearable Sensor Technologies},
  author = {Alsadoon, Abeer and {Al-Naymat}, Ghazi and Jerew, Oday D.},
  year = {2024},
  month = jan,
  journal = {Multimedia Tools and Applications},
  volume = {83},
  number = {26},
  pages = {67825--67870},
  issn = {1573-7721},
  doi = {10.1007/s11042-024-18177-0}
}

@article{andradesilva_EvolutionaryAlgorithm_2017,
  title = {An Evolutionary Algorithm for Clustering Data Streams with a Variable Number of Clusters},
  author = {Andrade Silva, Jonathan De and Hruschka, Eduardo Raul and Gama, Jo{\~a}o},
  year = {2017},
  month = jan,
  journal = {Expert Systems with Applications},
  volume = {67},
  pages = {228--238},
  issn = {09574174},
  doi = {10.1016/j.eswa.2016.09.020}
}

@inproceedings{aslanci_DetectionRegime_2017,
  title = {Detection of Regime Switching Points in Non-Stationary Sequences Using Stochastic Learning Based Weak Estimation Method},
  booktitle = {2017 {{IEEE}} 15th {{International Conference}} on {{Industrial Informatics}} ({{INDIN}})},
  author = {Aslanci, Ezdin and Coskun, Kutalmis and Schuller, Peter and Tumer, Borahan},
  year = {2017},
  month = jul,
  pages = {787--792},
  publisher = {IEEE},
  address = {Emden},
  doi = {10.1109/INDIN.2017.8104873},
  copyright = {All rights reserved},
  isbn = {978-1-5386-0837-1}
}

@article{aydemir_AnomalyMonitoring_2020,
  title = {Anomaly Monitoring Improves Remaining Useful Life Estimation of Industrial Machinery},
  author = {Aydemir, Gurkan and Acar, Burak},
  year = {2020},
  month = jul,
  journal = {Journal of Manufacturing Systems},
  volume = {56},
  pages = {463--469},
  issn = {02786125},
  doi = {10.1016/j.jmsy.2020.06.014}
}

@inproceedings{bifet_LearningTimeChanging_2007,
  title = {Learning from {{Time-Changing Data}} with {{Adaptive Windowing}}},
  booktitle = {Proceedings of the 2007 {{SIAM International Conference}} on {{Data Mining}}},
  author = {Bifet, Albert and Gavald{\`a}, Ricard},
  year = {2007},
  month = apr,
  pages = {443--448},
  publisher = {{Society for Industrial and Applied Mathematics}},
  doi = {10.1137/1.9781611972771.42},
  isbn = {978-0-89871-630-6}
}

@book{box_TimeSeries_2016,
  title = {Time Series Analysis: Forecasting and Control},
  shorttitle = {Time Series Analysis},
  author = {Box, George E. P. and Jenkins, Gwilym M. and Reinsel, Gregory C. and Ljung, Greta M.},
  year = {2016},
  series = {Wiley Series in Probability and Statistics},
  edition = {Fifth},
  publisher = {John Wiley \& Sons, Inc},
  address = {Hoboken, New Jersey},
  isbn = {978-1-118-67502-1},
  lccn = {QA280 .B67 2016}
}

@book{buede_EngineeringDesign_2009,
  title = {The {{Engineering Design}} of {{Systems}}: {{Models}} and {{Methods}}},
  shorttitle = {The {{Engineering Design}} of {{Systems}}},
  author = {Buede, Dennis M.},
  year = {2009},
  month = jan,
  edition = {1},
  publisher = {Wiley},
  doi = {10.1002/9780470413791},
  isbn = {978-0-470-16402-0}
}

@misc{burg_EvaluationChange_2020,
  title = {An {{Evaluation}} of {{Change Point Detection Algorithms}}},
  author = {van den Burg, Gerrit J. J. and Williams, Christopher K. I.},
  year = {2020},
  number = {arXiv:2003.06222},
  eprint = {2003.06222},
  primaryclass = {cs, stat},
  publisher = {arXiv},
  url = {http://arxiv.org/abs/2003.06222},
  archiveprefix = {arXiv}
}

@book{ching_MarkovChains_2013,
  title = {Markov {{Chains}}: {{Models}}, {{Algorithms}} and {{Applications}}},
  shorttitle = {Markov {{Chains}}},
  author = {Ching, Wai-Ki and Huang, Ximin and Ng, Michael K. and Siu, Tak-Kuen},
  year = {2013},
  series = {International {{Series}} in {{Operations Research}} \& {{Management Science}}},
  volume = {189},
  publisher = {Springer US},
  address = {Boston, MA},
  doi = {10.1007/978-1-4614-6312-2},
  copyright = {https://www.springernature.com/gp/researchers/text-and-data-mining},
  isbn = {978-1-4614-6311-5}
}

@article{coskun_AdaptiveEstimation_2022,
  title = {An Adaptive Estimation Method with Exploration and Exploitation Modes for Non-Stationary Environments},
  author = {Co{\c s}kun, Kutalm{\i}{\c s} and T{\"u}mer, Borahan},
  year = {2022},
  month = sep,
  journal = {Pattern Recognition},
  volume = {129},
  pages = {108702},
  issn = {00313203},
  doi = {10.1016/j.patcog.2022.108702}
}

@incollection{coskun_SyntacticPattern_2022,
  title = {A {{Syntactic Pattern Recognition Based Approach}} to {{Online Anomaly Detection}} and {{Identification}} on {{Electric Motors}}},
  booktitle = {Pattern {{Recognition}}},
  author = {Co{\c s}kun, Kutalm{\i}{\c s} and Kumralba{\c s}, Zeynep and {\c C}avu{\c s}, Hazel and T{\"u}mer, Borahan},
  editor = {Andres, Bj{\"o}rn and Bernard, Florian and Cremers, Daniel and Frintrop, Simone and Goldl{\"u}cke, Bastian and Ihrke, Ivo},
  year = {2022},
  volume = {13485},
  pages = {116--132},
  publisher = {Springer International Publishing},
  address = {Cham},
  doi = {10.1007/978-3-031-16788-1_8},
  isbn = {978-3-031-16787-4}
}

@misc{CWRU,
  title = {Bearing {{Data Center}} {\textbar} {{Case School}} of {{Engineering}} {\textbar} {{Case Western Reserve University}}},
  year = {2021},
  month = aug,
  journal = {Case School of Engineering},
  url = {https://engineering.case.edu/bearingdatacenter}
}

@article{davies_ClusterSeparation_1979,
  title = {A {{Cluster Separation Measure}}},
  author = {Davies, David L. and Bouldin, Donald W.},
  year = {1979},
  month = apr,
  journal = {IEEE Transactions on Pattern Analysis and Machine Intelligence},
  volume = {PAMI-1},
  number = {2},
  pages = {224--227},
  issn = {0162-8828, 2160-9292},
  doi = {10.1109/TPAMI.1979.4766909},
  copyright = {https://ieeexplore.ieee.org/Xplorehelp/downloads/license-information/IEEE.html}
}

@article{ditzler_LearningNonstationary_2015,
  title = {Learning in {{Nonstationary Environments}}: {{A Survey}}},
  shorttitle = {Learning in {{Nonstationary Environments}}},
  author = {Ditzler, Gregory and Roveri, Manuel and Alippi, Cesare and Polikar, Robi},
  year = {2015},
  month = nov,
  journal = {IEEE Computational Intelligence Magazine},
  volume = {10},
  number = {4},
  pages = {12--25},
  issn = {1556-603X},
  doi = {10.1109/MCI.2015.2471196}
}

@misc{fritzke_BreathingKMeans_2021,
  title = {Breathing {{K-Means}}},
  author = {Fritzke, Bernd},
  year = {2021},
  month = oct,
  number = {arXiv:2006.15666},
  eprint = {2006.15666},
  primaryclass = {cs, stat},
  publisher = {arXiv},
  url = {http://arxiv.org/abs/2006.15666},
  archiveprefix = {arXiv}
}

@book{fu_SyntacticPattern_1977,
  title = {Syntactic {{Pattern Recognition}}, {{Applications}}},
  editor = {Fu, King Sun and Keidel, Wolf Dieter and Wolter, Hans},
  year = {1977},
  series = {Communication and {{Cybernetics}}},
  volume = {14},
  publisher = {Springer Berlin Heidelberg},
  address = {Berlin, Heidelberg},
  doi = {10.1007/978-3-642-66438-0},
  copyright = {http://www.springer.com/tdm},
  isbn = {978-3-642-66440-3}
}

@book{gagniuc_MarkovChains_2017,
  title = {Markov {{Chains}}: {{From Theory}} to {{Implementation}} and {{Experimentation}}},
  shorttitle = {Markov {{Chains}}},
  author = {Gagniuc, Paul A},
  year = {2017},
  month = jun,
  publisher = {John Wiley \& Sons, Inc.},
  address = {Hoboken, NJ, USA},
  doi = {10.1002/9781119387596},
  isbn = {978-1-119-38759-6}
}

@article{gama_EvaluatingStream_2013,
  title = {On Evaluating Stream Learning Algorithms},
  author = {Gama, Jo{\~a}o and Sebasti{\~a}o, Raquel and Rodrigues, Pedro Pereira},
  year = {2013},
  month = mar,
  journal = {Machine Learning},
  volume = {90},
  number = {3},
  pages = {317--346},
  issn = {0885-6125, 1573-0565},
  doi = {10.1007/s10994-012-5320-9}
}

@article{hahsler_ClusteringData_2016,
  title = {Clustering {{Data Streams Based}} on {{Shared Density}} between {{Micro-Clusters}}},
  author = {Hahsler, Michael and Bolaos, Matthew},
  year = {2016},
  month = jun,
  journal = {IEEE Transactions on Knowledge and Data Engineering},
  volume = {28},
  number = {6},
  pages = {1449--1461},
  issn = {1041-4347},
  doi = {10.1109/TKDE.2016.2522412},
  copyright = {https://ieeexplore.ieee.org/Xplorehelp/downloads/license-information/IEEE.html}
}

@inproceedings{hahsler_TemporalStructure_2011,
  title = {Temporal {{Structure Learning}} for {{Clustering Massive Data Streams}} in {{Real-Time}}},
  booktitle = {Proceedings of the 2011 {{SIAM International Conference}} on {{Data Mining}}},
  author = {Hahsler, Michael and Dunham, Margaret H.},
  year = {2011},
  month = apr,
  pages = {664--675},
  publisher = {{Society for Industrial and Applied Mathematics}},
  doi = {10.1137/1.9781611972818.57},
  isbn = {978-0-89871-992-5}
}

@article{hellinger_NeueBegrundung_1909,
  title = {Neue {{Begr{\"u}ndung}} Der {{Theorie}} Quadratischer {{Formen}} von Unendlichvielen {{Ver{\"a}nderlichen}}.},
  author = {Hellinger, E.},
  year = {1909},
  month = jul,
  journal = {Journal f{\"u}r die reine und angewandte Mathematik},
  volume = {1909},
  number = {136},
  pages = {210--271},
  issn = {1435-5345, 0075-4102},
  doi = {10.1515/crll.1909.136.210}
}

@article{hyde_FullyOnline_2017,
  title = {Fully Online Clustering of Evolving Data Streams into Arbitrarily Shaped Clusters},
  author = {Hyde, Richard and Angelov, Plamen and MacKenzie, A.R.},
  year = {2017},
  month = mar,
  journal = {Information Sciences},
  volume = {382--383},
  pages = {96--114},
  issn = {00200255},
  doi = {10.1016/j.ins.2016.12.004}
}

@article{khanra_BigData_2020,
  title = {Big Data Analytics in Healthcare: A Systematic Literature Review},
  shorttitle = {Big Data Analytics in Healthcare},
  author = {Khanra, Sayantan and Dhir, Amandeep and Islam, A. K. M. Najmul and M{\"a}ntym{\"a}ki, Matti},
  year = {2020},
  month = aug,
  journal = {Enterprise Information Systems},
  volume = {14},
  number = {7},
  pages = {878--912},
  issn = {1751-7575, 1751-7583},
  doi = {10.1080/17517575.2020.1812005}
}

@inproceedings{kolmanovsky_StochasticOptimal_2009,
  title = {Stochastic Optimal Control of Systems with Soft Constraints and Opportunities for Automotive Applications},
  booktitle = {2009 {{IEEE International Conference}} on {{Control Applications}}},
  author = {Kolmanovsky, I. V. and Filev, D. P.},
  year = {2009},
  month = jul,
  pages = {1265--1270},
  publisher = {IEEE},
  address = {St. Petersburg, Russia},
  doi = {10.1109/CCA.2009.5280822},
  isbn = {978-1-4244-4601-8}
}

@article{lu_LearningConcept_2018,
  title = {Learning under {{Concept Drift}}: {{A Review}}},
  shorttitle = {Learning under {{Concept Drift}}},
  author = {Lu, Jie and Liu, Anjin and Dong, Fan and Gu, Feng and Gama, Joao and Zhang, Guangquan},
  year = {2018},
  journal = {IEEE Transactions on Knowledge and Data Engineering},
  pages = {1--1},
  issn = {1041-4347, 1558-2191, 2326-3865},
  doi = {10.1109/TKDE.2018.2876857},
  copyright = {https://ieeexplore.ieee.org/Xplorehelp/downloads/license-information/IEEE.html}
}

@article{mizutani_ImprovingEstimation_2017,
  title = {Improving the {{Estimation}} of {{Markov Transition Probabilities Using Mechanistic-Empirical Models}}},
  author = {Mizutani, Daijiro and Lethanh, Nam and Adey, Bryan T. and Kaito, Kiyoyuki},
  year = {2017},
  month = oct,
  journal = {Frontiers in Built Environment},
  volume = {3},
  pages = {58},
  issn = {2297-3362},
  doi = {10.3389/fbuil.2017.00058}
}

@article{mousavi_DataStream_2015,
  title = {Data {{Stream Clustering Algorithms}}: {{A Review}}},
  author = {Mousavi, Maryam and Bakar, Azuraliza Abu and Vakilian, Mohammadmahdi},
  year = {2015},
  journal = {Int J Adv Soft Comput Appl},
  volume = {7},
  number = {3},
  pages = {13}
}

@book{narendra_LearningAutomata_2012,
  title = {Learning Automata: An Introduction},
  shorttitle = {Learning Automata},
  author = {Narendra, Kumpati S. and Thathachar, Mandayam A. L.},
  year = {2012},
  publisher = {Dover Publications, Inc},
  address = {Mineola, New York},
  isbn = {978-0-486-49877-5},
  lccn = {Q335 .N37 2012}
}

@article{oommen_StochasticLearningbased_2006,
  title = {Stochastic Learning-Based Weak Estimation of Multinomial Random Variables and Its Applications to Pattern Recognition in Non-Stationary Environments},
  author = {Oommen, B. John and Rueda, Luis},
  year = {2006},
  month = mar,
  journal = {Pattern Recognition},
  volume = {39},
  number = {3},
  pages = {328--341},
  issn = {00313203},
  doi = {10.1016/j.patcog.2005.09.007}
}

@article{page_ContinuousInspection_1954,
  title = {Continuous {{Inspection Schemes}}},
  author = {Page, Ewan S},
  year = {1954},
  journal = {Biometrika},
  volume = {41},
  pages = {17}
}

@article{pascual-marqui_SegmentationBrain_1995,
  title = {Segmentation of Brain Electrical Activity into Microstates: Model Estimation and Validation},
  shorttitle = {Segmentation of Brain Electrical Activity into Microstates},
  author = {{Pascual-Marqui}, R.D. and Michel, C.M. and Lehmann, D.},
  year = {1995},
  month = jul,
  journal = {IEEE Transactions on Biomedical Engineering},
  volume = {42},
  number = {7},
  pages = {658--665},
  issn = {0018-9294, 1558-2531},
  doi = {10.1109/10.391164},
  copyright = {https://ieeexplore.ieee.org/Xplorehelp/downloads/license-information/IEEE.html}
}

@article{puschmann_AdaptiveClustering_2017,
  title = {Adaptive {{Clustering}} for {{Dynamic IoT Data Streams}}},
  author = {Puschmann, Daniel and Barnaghi, Payam and Tafazolli, Rahim},
  year = {2017},
  month = feb,
  journal = {IEEE Internet of Things Journal},
  volume = {4},
  number = {1},
  pages = {64--74},
  issn = {2327-4662, 2372-2541},
  doi = {10.1109/JIOT.2016.2618909},
  copyright = {https://ieeexplore.ieee.org/Xplorehelp/downloads/license-information/IEEE.html}
}

@article{raab_ReactiveSoft_2020,
  title = {Reactive {{Soft Prototype Computing}} for {{Concept Drift Streams}}},
  author = {Raab, Christoph and Heusinger, Moritz and Schleif, Frank-Michael},
  year = {2020},
  month = nov,
  journal = {Neurocomputing},
  volume = {416},
  pages = {340--351},
  issn = {09252312},
  doi = {10.1016/j.neucom.2019.11.111}
}

@article{ren_AnomalyDetection_2017,
  title = {Anomaly Detection Based on a Dynamic {{Markov}} Model},
  author = {Ren, Huorong and Ye, Zhixing and Li, Zhiwu},
  year = {2017},
  month = oct,
  journal = {Information Sciences},
  volume = {411},
  pages = {52--65},
  issn = {00200255},
  doi = {10.1016/j.ins.2017.05.021}
}

@incollection{schulz_FastAdaptive_2008,
  title = {Fast and {{Adaptive Variable Order Markov Chain Construction}}},
  booktitle = {Algorithms in {{Bioinformatics}}},
  author = {Schulz, Marcel H. and Weese, David and Rausch, Tobias and D{\"o}ring, Andreas and Reinert, Knut and Vingron, Martin},
  editor = {Crandall, Keith A. and Lagergren, Jens},
  year = {2008},
  volume = {5251},
  pages = {306--317},
  publisher = {Springer Berlin Heidelberg},
  address = {Berlin, Heidelberg},
  doi = {10.1007/978-3-540-87361-7_26},
  isbn = {978-3-540-87360-0}
}

@article{silva_DataStream_2013,
  title = {Data Stream Clustering: {{A}} Survey},
  shorttitle = {Data Stream Clustering},
  author = {Silva, Jonathan A. and Faria, Elaine R. and Barros, Rodrigo C. and Hruschka, Eduardo R. and Carvalho, Andr{\'e} C. P. L. F. De and Gama, Jo{\~a}o},
  year = {2013},
  month = oct,
  journal = {ACM Computing Surveys},
  volume = {46},
  number = {1},
  pages = {1--31},
  issn = {0360-0300, 1557-7341},
  doi = {10.1145/2522968.2522981}
}

@article{surmeli_MultivariateTime_2020,
  title = {Multivariate {{Time Series Clustering}} and Its {{Application}} in {{Industrial Systems}}},
  author = {S{\"u}rmeli, Bar{\i}{\c s} G{\"u}n and T{\"u}mer, M. Borahan},
  year = {2020},
  month = apr,
  journal = {Cybernetics and Systems},
  volume = {51},
  number = {3},
  pages = {315--334},
  issn = {0196-9722, 1087-6553},
  doi = {10.1080/01969722.2019.1691851}
}

@inproceedings{sztyler_OnbodyLocalization_2016,
  title = {On-Body Localization of Wearable Devices: {{An}} Investigation of Position-Aware Activity Recognition},
  shorttitle = {On-Body Localization of Wearable Devices},
  booktitle = {2016 {{IEEE International Conference}} on {{Pervasive Computing}} and {{Communications}} ({{PerCom}})},
  author = {Sztyler, Timo and Stuckenschmidt, Heiner},
  year = {2016},
  month = mar,
  pages = {1--9},
  publisher = {IEEE},
  address = {Sydney, Australia},
  doi = {10.1109/PERCOM.2016.7456521},
  isbn = {978-1-4673-8779-8}
}

@article{tilahun_CooperativeMultiagent_2017,
  title = {Cooperative {{Multiagent System}} for {{Parking Availability Prediction Based}} on {{Time Varying Dynamic Markov Chains}}},
  author = {Tilahun, Surafel Luleseged and Di Marzo Serugendo, Giovanna},
  year = {2017},
  journal = {Journal of Advanced Transportation},
  volume = {2017},
  pages = {1--14},
  issn = {0197-6729, 2042-3195},
  doi = {10.1155/2017/1760842},
  copyright = {http://creativecommons.org/licenses/by/4.0/}
}

@article{wang_BigData_2022,
  title = {Big Data Analytics for Intelligent Manufacturing Systems: {{A}} Review},
  shorttitle = {Big Data Analytics for Intelligent Manufacturing Systems},
  author = {Wang, Junliang and Xu, Chuqiao and Zhang, Jie and Zhong, Ray},
  year = {2022},
  month = jan,
  journal = {Journal of Manufacturing Systems},
  volume = {62},
  pages = {738--752},
  issn = {02786125},
  doi = {10.1016/j.jmsy.2021.03.005}
}

@article{wang_EvolutionaryMarkov_2022,
  title = {Evolutionary {{Markov Dynamics}} for {{Network Community Detection}}},
  author = {Wang, Zhen and Wang, Chunyu and Li, Xianghua and Gao, Chao and Li, Xuelong and Zhu, Junyou},
  year = {2022},
  month = mar,
  journal = {IEEE Transactions on Knowledge and Data Engineering},
  volume = {34},
  number = {3},
  pages = {1206--1220},
  issn = {1041-4347, 1558-2191, 2326-3865},
  doi = {10.1109/TKDE.2020.2997043},
  copyright = {https://ieeexplore.ieee.org/Xplorehelp/downloads/license-information/IEEE.html}
}

@article{yin_ImprovedClustering_2018,
  title = {Improved Clustering Algorithm Based on High-Speed Network Data Stream},
  author = {Yin, Chunyong and Xia, Lian and Zhang, Sun and Sun, Ruxia and Wang, Jin},
  year = {2018},
  month = jul,
  journal = {Soft Computing},
  volume = {22},
  number = {13},
  pages = {4185--4195},
  issn = {1432-7643, 1433-7479},
  doi = {10.1007/s00500-017-2708-2}
}

@article{zhang_DBIECManEvolving_2017,
  title = {{{DBIECM-an}} Evolving Clustering Method for Streaming Data Clustering},
  author = {Zhang, Kaisong and Zhong, Luo and Tian, Lan and Zhang, Xuanya and Li, Lin},
  year = {2017},
  month = mar,
  journal = {Advances in Modelling and Analysis B},
  volume = {60},
  number = {1},
  pages = {239--254},
  issn = {12404543},
  doi = {10.18280/ama_b.600115}
}

@article{zubaroglu_DataStream_2021,
  title = {Data Stream Clustering: A Review},
  shorttitle = {Data Stream Clustering},
  author = {Zubaro{\u g}lu, Alaettin and Atalay, Volkan},
  year = {2021},
  month = feb,
  journal = {Artificial Intelligence Review},
  volume = {54},
  number = {2},
  pages = {1201--1236},
  issn = {0269-2821, 1573-7462},
  doi = {10.1007/s10462-020-09874-x}
}

@misc{misc_eeg_eye_state_264,
  author       = {Roesler,Oliver},
  title        = {{EEG Eye State}},
  year         = {2013},
  howpublished = {UCI Machine Learning Repository},
  note         = {{DOI}: https://doi.org/10.24432/C57G7J}
}

@article{pycrostates,
    doi = {10.21105/joss.04564},
    url = {https://doi.org/10.21105/joss.04564},
    year = {2022},
    publisher = {The Open Journal},
    volume = {7},
    number = {78},
    pages = {4564},
    author = {Victor Férat and Mathieu Scheltienne and Denis Brunet and Tomas Ros and Christoph Michel},
    title = {Pycrostates: a Python library to study EEG microstates},
    journal = {Journal of Open Source Software}
}

@inproceedings{optuna_2019,
    title={Optuna: A Next-generation Hyperparameter Optimization Framework},
    author={Akiba, Takuya and Sano, Shotaro and Yanase, Toshihiko and Ohta, Takeru and Koyama, Masanori},
    booktitle={Proceedings of the 25th {ACM} {SIGKDD} International Conference on Knowledge Discovery and Data Mining},
    year={2019}
}

\clearpage
\appendix
\renewcommand{\thefigure}{A\arabic{figure}}
\setcounter{figure}{0}
\renewcommand{\thetable}{A\arabic{table}}
\setcounter{table}{0}

\section{Supplemental Material}

\subsection{Supplemental Figures}

\begin{figure}[htbp]
    \centering
    \subfloat[Change in runtime with increasing Markov order.\label{fig:eq_kt}]{%
        \includegraphics[width=0.45\linewidth]{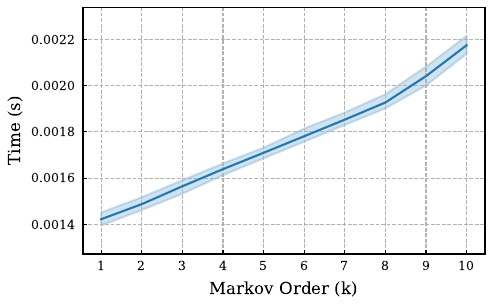}
    }
    \hfill
    \subfloat[Change in runtime with increasing number of states.\label{fig:eq_mt}]{%
        \includegraphics[width=0.45\linewidth]{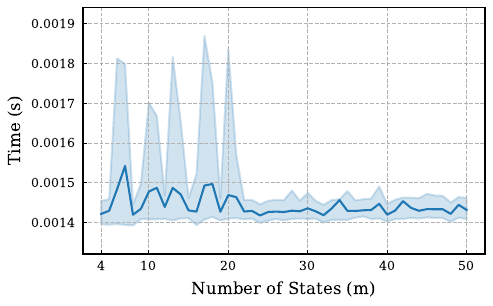}
    }
    \caption{
        Runtimes (in seconds) of the proposed update equation to process $10^3$ observations.
        Each value is the average of $100$ runs.
    }
    \label{fig:eq}
\end{figure}

\begin{figure}[htbp]
    \centering
    \subfloat[Change in runtime with increasing Markov order.\label{fig:kt_e2e}]{%
        \includegraphics[width=0.45\linewidth]{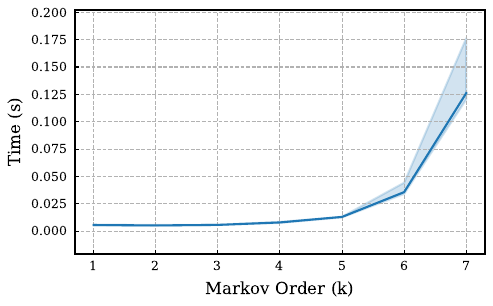}
    }
    \hfill
    \subfloat[Change in runtime with increasing number of states.\label{fig:km_e2e}]{%
        \includegraphics[width=0.45\linewidth]{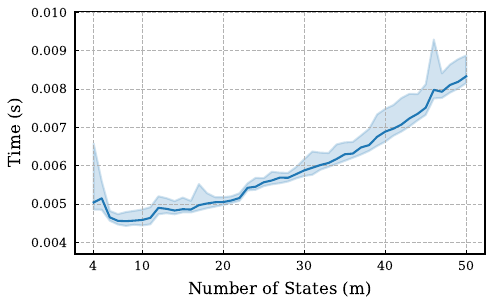}
    }
    \caption{
        Runtimes (in seconds) of end-to-end EMC to process $10^3$ observations.
        Each value is the average of $100$ runs.
    }
    \label{fig:e2e}
\end{figure}

\subsection{Supplemental Tables}

\setlength\aboverulesep{0pt}
\setlength\belowrulesep{0pt}
\setlength\cellspacetoplimit{3pt}
\setlength\cellspacebottomlimit{3pt}

\begin{table}[htbp]
    \centering
    \renewcommand{\arraystretch}{1.5}
    \rowcolors{2}{rowColor}{white}
    \begin{tabularx}{\textwidth}{lX}
        \toprule
        \rowcolor{hdrColor}
        \textbf{Algorithm} & \textbf{Hyperparameter Values} \\ \midrule
        EMC &
        $\lambda=\{0.94,0.95\}$,
        $\beta=0$,
        $\delta=\{0.3,0.2\}$,
        $\eta=\{0.5,0.25\}$,
        $\tau=100$ \\
        MC-ADWIN &
        $\delta=0.002$,
        $\text{clock}=3$,
        $\text{max\_buckets}=34$,
        $\text{min\_window\_length}=14$,
        $\text{grace\_period}=58$ \\
        MC-SW &
        $w=\{100,328,500\}$ \\
        \bottomrule
    \end{tabularx}
    \caption{
        Hyperparameter values for each algorithm run in \emph{probability tracking} experiments on synthetic data.
        For EMC, parameters shown as tuples represent the values for fast and slow learning, respectively.
        The Markov order $k$ is $1$ for all algorithms.
    }
    \label{tab:params_pt}
\end{table}

\begin{table}[htbp]
    \centering
    \renewcommand{\arraystretch}{1.5}
    \rowcolors{2}{rowColor}{white}
    \begin{tabulary}{\textwidth}{lL}
        \toprule
        \rowcolor{hdrColor}
        \textbf{Algorithm} & \textbf{Hyperparameter Values} \\ \midrule
        ADWIN &
        $\delta=0.017$,
        $\text{clock}=21$,
        $\text{max\_buckets}=36$,
        $\text{min\_window\_length}=28$,
        $\text{grace\_period}=79$ \\
        EMC &
        $\lambda=\{0.91,0.95\}$,
        $\beta=0$,
        $\delta=\{0.3,0.05\}$,
        $\eta=\{0.35,0.35\}$,
        $\tau=75$ \\
        KSWIN &
        $\alpha=0.005$,
        $n=811$,
        $r=211$ \\
        PHT &
        $\delta=0.034$,
        $\lambda=53.51$,
        $\alpha=0.993$,
        $\text{min\_instances}=230$ \\
        SCD &
        $\lambda=0.98$,
        $L=100$,
        $N=4$,
        $M=100$,
        $\text{minHeight}=0.1$,
        $\text{minPD}=500$ \\
        \bottomrule
    \end{tabulary}
    \caption{
        Hyperparameter values for each algorithm run in \emph{change point detection} experiments on synthetic data.
        For EMC, parameters shown as tuples represent the values for fast and slow learning, respectively.
        The Markov order $k$ is $1$ for all algorithms.
    }
    \label{tab:params_cpd}
\end{table}

\begin{table}[htbp]
    \centering
    \renewcommand{\arraystretch}{1.5}
    \rowcolors{2}{rowColor}{white}
    \begin{tabulary}{\textwidth}{lL}
        \toprule
        \rowcolor{hdrColor}
        \textbf{Algorithm} & \textbf{Hyperparameter Values} \\ \midrule
        EMC &
        $\lambda=\{0.92,0.97\}$,
        $\beta=0$,
        $\delta=\{0.2,0.05\}$,
        $\eta=\{0.35,0.3\}$,
        $\tau=25$ \\
        EPSTM &
        $t=2$,
        $L=1$,
        $\text{distance\_threshold}=0.1$ (for clustering) \\
        CluStream &
        $\text{n\_macro\_clusters}=6$,
        $\text{time\_gap}=129$,
        $\text{max\_micro\_clusters}=72$,
        $\text{time\_window}=1299$,
        $\text{micro\_cluster\_r\_factor}=4$ \\
        DBStream &
        $\text{clustering\_threshold}=1.12$,
        $\text{fading\_factor}=0.007$,
        $\text{cleanup\_interval}=13$,
        $\text{intersection\_factor}=0.16$,  
        $\text{minimum\_weight}=6.56$ \\
        \bottomrule
    \end{tabulary}
    \caption{
        Hyperparameter values for each algorithm run in \emph{mode discovery and recognition} experiments on synthetic data.
        For EMC, parameters shown as tuples represent the values for fast and slow learning, respectively.
        The Markov order $k$ is $1$ for all algorithms.
    }
    \label{tab:params_mdr}
\end{table}

\begin{table}[htbp]
    \centering
    \renewcommand{\arraystretch}{1.5}
    \rowcolors{2}{rowColor}{white}
    \begin{tabularx}{\textwidth}{Xcccccc}
        \toprule
        \rowcolor{hdrColor}
        \textbf{Application} & $\bm{k}$ & $\bm{\lambda}$ & $\bm{\beta}$ & $\bm{\delta}$ & $\bm{\eta}$ & $\bm{\tau}$ \\ \midrule
        Human Activity Recognition &
        $1$ &
        $\{0.94,0.95\}$ &
        $0.01$ &
        $\{0.4,0.05\}$ &
        $\{0.2,0.35\}$ &
        $50$ \\
        Electric Motor Condition Monitoring &
        $2$ &
        $\{0.94,0.96\}$ &
        $0.003$ &
        $\{0.15,0.085\}$ &
        $\{0.07,0.2\}$ &
        $25$ \\
        EEG Eye State Detection &
        $1$ &
        $\{0.93,0.96\}$ &
        $0.001$ &
        $\{0.3,0.15\}$ &
        $\{0.2,0.45\}$ &
        $25$ \\
        \bottomrule
    \end{tabularx}
    \caption{
        Hyperparameter values for EMC on applications on real-world data.
        Parameters shown as tuples represent the values for fast and slow learning, respectively.
    }
    \label{tab:params_har}
\end{table}

\end{document}